\documentclass{article} 
\usepackage{qatfm_preprint,times}

\usepackage{wrapfig} 
\usepackage[utf8]{inputenc} 
\usepackage[T1]{fontenc}    
\usepackage{hyperref}       
\usepackage{url}            
\hypersetup{hidelinks}
\usepackage{booktabs}       
\usepackage{amsfonts}       
\usepackage{nicefrac}       
\usepackage{xpatch}
\usepackage{natbib}  
\usepackage{xpatch}
\usepackage{graphicx}
\usepackage{subfigure}
\usepackage{adjustbox}
\usepackage{enumitem} 

\usepackage{bm}
\usepackage{colortbl} 
\usepackage{amsmath}
\usepackage{amssymb} 

            \def\x{\bm{x}}

\def\A{\mathbf{A}}    \def\B{\mathbf{B}}

\def\RR{\mathbb{R}}
\def\EE{\mathbb{E}}

\usepackage{algorithm}
\usepackage{algpseudocode}
\usepackage{subcaption}

\newtheorem{theorem}{Theorem}
\newtheorem{proposition}{Proposition}

\newtheorem{remark}{Remark}

\newtheorem{definition}{Definition}
\newenvironment{proof}[1][Proof]{\par\noindent\textbf{#1.} }{\hfill$\square$\par}
\usepackage{multirow}
\usepackage{pifont}
\usepackage{amssymb}
\newcommand{\cmark}{\textcolor{green!60!black}{\ding{51}}}
\newcommand{\xmark}{\textcolor{red}{\ding{55}}}
\usepackage{threeparttable}
\usepackage{colortbl}
\usepackage{threeparttable}

\title{Beyond Straightness: Non-Crossing Flow Matching via Quantile AlignTree Coupling}

\author{
  Junyi Lin\textsuperscript{1}, 
  Mengyu Li \textsuperscript{2}, 
  Jingxuan Hu\textsuperscript{1}, 
  Kejun He \thanks{Corresponding author}\textsuperscript{1}, 
  Cheng Meng \thanks{Corresponding author}\textsuperscript{1} \\
  \textsuperscript{1}Institute of Statistics and Big Data, Renmin University of China, Beijing, China \\
  \textsuperscript{2}Department of Statistics and Data Science, Tsinghua University, Beijing, China \\
  \texttt{\{junyilin, 2025104242, chengmeng, kejunhe\}@ruc.edu.cn}, \\
  \texttt{mengyuli@tsinghua.edu.cn}
}

\preprintfinalcopy 
\begin{document}

\maketitle

\begin{abstract}
The performance of Flow Matching largely depends on the quality of the coupling between the source and target distributions. However, independent coupling often leads to path crossings and local velocity ambiguity, while OT-based couplings typically incur high construction costs. To address this challenge, we propose \textbf{Quantile AlignTree Flow Matching (QAT-FM)}, an efficient structured coupling strategy that constructs a hierarchical coupling between a Gaussian prior and the target data distribution via a quantile-aligned tree structure. QAT-FM constructs the coupling in $\mathcal{O}(Nd\log N)$ time and supports per-pair source sampling with $\mathcal{O}(d)$ complexity, enabling scalable training for large-scale high-dimensional generative tasks. Theoretically, we prove that the QAT coupling satisfies marginal consistency, induces non-crossing linear interpolation paths, and consistently improves path separation at intermediate times compared with independent coupling, thereby alleviating local velocity ambiguity. QAT-FM further extends naturally to conditional generation, enabling structured conditional coupling while preserving global Gaussian alignment. Experiments across diverse benchmark datasets demonstrate that QAT-FM achieves competitive generative performance while substantially reducing coupling construction cost.
\end{abstract}

\section{Introduction}
\label{sec:intro}

In recent years, continuous-time flow-based generative models have become increasingly prominent in generative modeling~\citep{chen2018neural, grathwohl2019ffjord, lipman2022flow, liu2023flow}. 
A representative class is continuous normalizing flows (CNFs), which define 
continuous transformations as the solution flows of ordinary differential equations (ODEs) whose time-dependent velocity fields are parameterized by neural networks.
During sampling, new data are generated by numerically integrating the learned ODE dynamics.
To improve training efficiency, Flow Matching~\citep{lipman2022flow} directly supervises the velocity field with targets derived from pre-specified conditional probability paths, avoiding the repeated numerical ODE integration required by conventional CNF training, enabling scalable training of generative models. 
Flow Matching has achieved competitive performance across diverse domains, including image generation~\citep{ma2024sit, esser2024scaling}, speech synthesis~\citep{le2023voicebox, liu2024generative}, and text generation~\citep{hu2024flowseq, hu2026elf}.

Although Flow Matching improves the scalability of training continuous-time generative models, its performance remains sensitive to the conditional probability paths and source--target couplings constructed during training~\citep{lipman2022flow, tong2024improving, pooladian2023multisample}. In common linear-path formulations, Flow Matching trains the velocity field by regressing target velocities along linear interpolation paths between noise and data samples; different coupling choices therefore directly alter the structure of the supervision signal. 
When training paths cross, or pass too closely through the same intermediate region with substantially different target velocities, \emph{velocity ambiguity} arises locally; under mean-squared-error training, these inconsistent velocity directions are averaged, increasing the difficulty of fitting the velocity field~\citep{guo2025variational}. 
By contrast, training paths with the \emph{non-crossing} property or good \emph{path separation} reduce this local ambiguity, yielding a more deterministic mapping from intermediate states to target velocities and thereby improving training stability. 
In conditional generation, coupling strategies must also preserve the alignment between samples and conditions, since couplings that ignore the conditioning information may produce geometrically favorable but conditionally inconsistent supervision signals~\citep{chemseddine2025conditional, cheng2025curse}.

Several existing approaches seek to alleviate path crossing and velocity ambiguity by improving coupling quality. The reflow procedure in Rectified Flow~\citep{liu2023flow} reconstructs pairings between noise and data using a learned model, yielding more deterministic and straighter generation trajectories; however, this requires additional sampling and re-training steps, imposing substantial computational overhead. OT-based methods~\citep{villani2009optimal} construct geometrically optimal pairings by minimizing transport cost; under a quadratic cost, the induced displacement interpolation exhibits a favorable non-crossing structure that alleviates velocity ambiguity. Nonetheless, exact empirical OT is computationally expensive and difficult to scale to large datasets; in conditional generation, maintaining conditioning consistency typically requires condition-aware constraints or condition-wise couplings, further increasing algorithmic complexity. Mini-batch OT~\citep{pooladian2023multisample, tong2024improving} substantially reduces this cost by solving OT within each mini-batch, but the resulting coupling is primarily local and cannot strictly preserve the non-crossing structure at the full-dataset level. Most of the above methods construct couplings only over finite Gaussian samples; SD-FM~\citep{mousavi2026flow} further improves the alignment of continuous Gaussian information via gradient-based optimization, but its iterative optimization incurs substantial sampling or optimization cost and does not yield an explicit low-cost global coupling construction.

\begin{table}[ht]
\centering
\footnotesize
\caption{\footnotesize 
Comparison of Flow Matching coupling strategies.}
\label{tab:flow_comparison}
\begin{threeparttable}
\begin{tabular}{lccccc}
\toprule
\textbf{Method}
& \textbf{Build Cost}\tnote{1}
& \textbf{Sample Cost}
& \textbf{Non-Cross}
& \textbf{Path Sep.}
& \textbf{Gaussian Align.} \\
\midrule
FM
& --- & $\mathcal{O}(Nd)$ & \xmark & \xmark & \cmark\cmark \\

RecFlow
& $\mathcal{O}(N E \Theta)$ & $\mathcal{O}(N \Theta)$ & \cmark & \xmark & \xmark \\

OT-FM$_{\text{(Full)}}$
& $\mathcal{O}(N^3+N^2d)$ & $\mathcal{O}(N^2+Nd)$ & \cmark & \cmark & \xmark \\

OT-FM$_{\text{(Minibatch)}}$
& $\mathcal{O}(NB+NBd)$ & $\mathcal{O}(NB+Nd)$ & \xmark & \xmark & \xmark \\

SD-FM
& $\mathcal{O}(NKd)$ & $\mathcal{O}(N^2d)$ & \cmark & \cmark & \cmark \\

\rowcolor{gray!15}
\textbf{QAT-FM$_{\textbf{(Ours)}}$}
& $\mathcal{O}(Nd\log N)$ 
& $\mathcal{O}(Nd)$ 
& \cmark 
& \cmark\cmark\tnote{2} 
& \cmark\cmark \\
\bottomrule
\end{tabular}
\begin{tablenotes}
\footnotesize
\item[1] \textbf{Notation.} $N$: dataset size; $B$: mini-batch size; $d$: data dimension; $K$: SD optimization steps; $E$: reflow rounds; $\Theta$: one network forward pass. Complexities are reported up to constant factors.
\item[2] \cmark\cmark indicates that the property is also preserved in conditional generation.
\end{tablenotes}
\end{threeparttable}
\end{table}

Table~\ref{tab:flow_comparison} compares representative Flow Matching coupling strategies along five axes: coupling construction cost (\textbf{Build Cost}), sampling cost (\textbf{Sample Cost}), whether the coupling induces non-crossing paths at the full-dataset level (\textbf{Non-Cross}), whether it improves path separation (\textbf{Path Sep.}), and whether the source marginal is aligned with a continuous Gaussian prior (\textbf{Gaussian Align.}). As shown, prior methods generally trade off computational efficiency, path geometry, and Gaussian-prior alignment. To address these limitations, we propose \textbf{Quantile AlignTree Flow Matching (QAT-FM)}, a low-cost structured coupling strategy that achieves non-crossing paths, improved path separation, and continuous Gaussian-prior alignment, with several of these properties naturally extending to conditional generation.

\noindent\textbf{Contributions.} Our main contributions are as follows:
\begin{itemize}[noitemsep, topsep=0pt, left=5pt]
    \item[1)] We propose \textbf{Quantile AlignTree (QAT)}, an efficient semi-discrete transport construction based on binary tree-structured quantile alignment. QAT induces an exact coupling between an empirical data distribution and a continuous Gaussian prior. With construction complexity $\mathcal{O}(Nd\log N)$ and per-pair sampling complexity $\mathcal{O}(d)$, QAT scales to large-scale, high-dimensional generative modeling tasks.

    \item[2)] The QAT coupling supports multiple tree-splitting strategies and can naturally incorporate conditioning information into both tree construction and coupling.

    \item[3)] We prove that QAT induces favorable path structure: the resulting linear interpolation paths satisfy the \emph{non-crossing} property, and path separation at intermediate times is uniformly greater than that of standard independent-coupling flow matching, alleviating local velocity ambiguity.

    \item[4)] We conduct systematic experiments on low-dimensional synthetic data, unconditional generation, discrete class-conditional generation, continuous text-conditional generation, and large-scale image synthesis. Results show that QAT-FM achieves competitive generation quality with substantially lower coupling construction complexity than OT-based methods.
\end{itemize}

\section{Preliminaries}
\label{sec:pre}

\paragraph{Notation.}

Let $\mathcal{P}(\RR^d)$ denote the set of Borel probability measures on $\RR^d$. For $\alpha\in\mathcal{P}(\RR^d)$ and $\beta\in\mathcal{P}(\RR^{d'})$, a joint distribution $\pi\in\mathcal{P}(\RR^d\times\RR^{d'})$ is called a \emph{coupling} of $\alpha$ and $\beta$ if its marginals are $\alpha$ and $\beta$; we write $\Pi(\alpha,\beta)$ for the collection of all such couplings. For a vector $\x\in\RR^d$, let $\x^{(j)}$ denote its $j$-th coordinate. For $\rho\in\mathcal{P}(\RR^d)$, let $\rho^{(j)}$ denote its $j$-th marginal distribution, $F_\rho^{(j)}$ the cumulative distribution function (CDF) of $\rho^{(j)}$, and $\bigl(F_\rho^{(j)}\bigr)^{-1}$ its generalized inverse. For a measurable map $T:\RR^d\to\RR^k$, we write $T_\#\rho$ for the pushforward of $\rho$ under $T$.

\paragraph{Flow-based Generative Modeling.}

Given a time-dependent velocity field $v:[0,1]\times\RR^d\to\RR^d$, write $v_t(x):=v(t,x)$. The \emph{flow map} $\phi_t:\RR^d\to\RR^d$ induced by $v_t$ is defined as the solution to the ODE
\[
\frac{d}{dt}\phi_t(x)=v_t(\phi_t(x)),\qquad \phi_0(x)=x.
\]
Setting $\rho_t=(\phi_t)_\#\gamma$, where $\gamma$ denotes the source distribution, the pair $(\rho_t,v_t)$ satisfies the continuity equation
\begin{equation}
\label{eq:ceq}
    \partial_t\rho_t+\nabla\cdot(\rho_t v_t)=0,
\end{equation}
with $\rho_0=\gamma$. When $(\phi_1)_\#\gamma=\mu$, the flow transports the source $\gamma$ to the target distribution $\mu$. The goal of flow-based generative modeling is therefore to learn a parameterized velocity field $v_\theta$ whose induced flow map satisfies $(\phi_1)_\#\gamma=\mu$.

Flow Matching~\citep{lipman2022flow} provides a tractable training procedure for this velocity field. It specifies a probability path $(\rho_t)_{t\in[0,1]}$ connecting $\gamma$ to $\mu$ together with a compatible target velocity field $u_t$. The parameterized velocity field is then learned via the regression objective
\begin{equation}
\label{eq:FM}
    \mathcal{L}_{\mathrm{FM}}(\theta)
    =
    \EE_{\substack{
    t\sim\mathrm{Unif}[0,1]\\
    x_t\sim\rho_t
    }}
    \left\|
    v_\theta(t,x_t)-u_t(x_t)
    \right\|^2.
\end{equation}
This formulation avoids backpropagating through numerical ODE solvers during training.

\paragraph{Conditional Flow Matching.}

In practice, the marginal velocity field $u_t$ is often intractable to construct directly. Conditional Flow Matching\footnote{Here, ``conditional'' refers to conditioning on paired endpoints. To avoid ambiguity with conditional generation, we refer to this formulation simply as Flow Matching (FM) throughout the remainder of the paper unless otherwise specified.} (CFM)~\citep{lipman2022flow, tong2024improving} addresses this by conditioning on paired endpoints: given $\pi\in\Pi(\gamma,\mu)$, draw $(x_0,x_1)\sim\pi$, specify a conditional path $p_t(\cdot\mid x_0,x_1)$ and its compatible velocity $u_t(\cdot\mid x_0,x_1)$, and train $v_\theta(t,x_t)$ to match $u_t(x_t\mid x_0,x_1)$, where $x_t\sim p_t(\cdot\mid x_0,x_1)$. A canonical and widely used choice is the linear interpolation path
\[
x_t=(1-t)x_0+tx_1,
\qquad
u_t(x_t\mid x_0,x_1)=x_1-x_0,
\]
under which the training objective reduces to
\begin{equation}
\label{eq:linear-cfm}
\mathcal{L}_{\mathrm{CFM}}(\theta)
=
\EE_{\substack{
t\sim\mathrm{Unif}[0,1]\\
(x_0,x_1)\sim\pi
}}
\left\|
v_\theta\!\left(t,(1-t)x_0+tx_1\right)
-
(x_1-x_0)
\right\|^2.
\end{equation}
This reveals that the endpoint pairing is governed by the coupling $\pi\in\Pi(\gamma,\mu)$. The simplest choice is the independent coupling $\pi=\gamma\otimes\mu$. A more principled alternative is the \emph{OT coupling}
\begin{equation}
\label{eq:OT_def}
    \pi_{\mathrm{OT}}
    \in
    \arg\min_{\pi\in\Pi(\gamma,\mu)}
    \int_{\RR^d\times\RR^d}
    \|x_0-x_1\|^2\,d\pi(x_0,x_1),
\end{equation}
which under the linear path tends to produce shorter, geometrically natural paired trajectories~\citep{pooladian2023multisample, tong2024improving}. However, computing the exact global OT coupling is expensive at large scale; in practice, mini-batch OT is used as an approximation, reducing computational cost at the expense of coupling exactness and cross-batch consistency. For the special case $\gamma=\mathcal{N}(0,I_d)$, \cite{mousavi2026flow} proposes solving a semi-discrete OT problem in~\eqref{eq:OT_def} to better align the continuous Gaussian source with the empirical target distribution.

\paragraph{Data Partitioning by Trees.}
The preceding discussion shows that the choice of coupling $\pi$ directly governs the geometry of particle trajectories in FM, while exact global OT coupling is prohibitively expensive at large scale~\citep{zhang2025fitting, mousavi2026flow}. 
This motivates binary tree-based space partitioning, which provides a simple way to organize data hierarchically and will serve as the foundation of our coupling construction.
Given a data space $\mathcal{X}\subseteq\RR^d$ and a dataset $\mathcal{D}=\{x_i\}_{i=1}^N\subseteq\mathcal{X}$, let $\widehat{\mu}_{\mathcal{D}}=N^{-1}\sum_{i=1}^N\delta_{x_i}$
denote its empirical distribution. A \emph{binary partition tree} $T$ consists of an internal node set $\mathcal{I}$ and a leaf node set $\mathcal{L}$. Each node $\eta$ is associated with a region $R_\eta\subseteq\mathcal{X}$ and data subset $\mathcal{D}_\eta=\mathcal{D}\cap R_\eta$, where the root $r$ satisfies $R_r=\mathcal{X}$.

For each internal node $\eta\in\mathcal{I}$, let $\widehat{\mu}_\eta=|\mathcal{D}_\eta|^{-1}\sum_{x\in\mathcal{D}_\eta}\delta_x$ denote the conditional empirical measure on $\mathcal{D}_\eta$. A \emph{split strategy} selects a split coordinate $\kappa_\eta\in[d]$ and threshold $s_\eta\in\RR$ based on $\widehat{\mu}_\eta$, partitioning $R_\eta$ into
\[
R_\eta^-=\{x\in R_\eta:x^{(\kappa_\eta)}\le s_\eta\}, \qquad
R_\eta^+=\{x\in R_\eta:x^{(\kappa_\eta)}>s_\eta\},
\]
and correspondingly partitioning $\mathcal{D}_\eta$ into $\mathcal{D}_\eta^-=\mathcal{D}_\eta\cap R_\eta^-$ and $\mathcal{D}_\eta^+=\mathcal{D}_\eta\cap R_\eta^+$. After recursive splitting, the leaf regions form a partition of $\mathcal{X}$, denoted $\mathcal{R}(T):=\{R_\ell:\ell\in\mathcal{L}\}$.


\section{Quantile AlignTree Flow Matching}
\label{sec:method}

\subsection{Quantile AlignTree Coupling}
\label{subsec:qat}

The preceding analysis shows that in the linear FM objective, the coupling $\pi$ governs how source samples are paired with target samples, directly shaping the geometry of flow paths. The key idea of QAT is to build a binary partition tree over the empirical data distribution and transfer its topology to the source distribution by matching the split mass ratios at each internal node.
Figure~\ref{fig:pipeline} summarizes the resulting coupling and sampling procedure.

Let $\widehat{\mu}_{\mathcal{D}}$ denote the empirical measure of dataset $\mathcal{D}=\{x_i\}_{i=1}^N\subseteq\RR^d$, and let $\gamma\in\mathcal{P}(\RR^d)$ be the source distribution, e.g., $\gamma=\mathcal{N}(0,I_d)$. Following the binary partition tree construction of Section~\ref{sec:pre}, we build a data tree $T(\widehat{\mu}_{\mathcal{D}})$ on $\widehat{\mu}_{\mathcal{D}}$. For each internal node $\eta\in\mathcal{I}$, let $(\kappa_\eta,s_\eta)$ denote its split coordinate and threshold, and let $\widehat{\mu}_\eta$ be the conditional empirical measure at that node. Based on the data tree, we define the Quantile AlignTree as follows.

\begin{definition}[Quantile AlignTree (QAT)]
\label{def:qat}
Given the data tree $T(\widehat{\mu}_{\mathcal{D}})$ and source distribution $\gamma$, the \emph{Quantile AlignTree} of $\gamma$ with respect to the data tree, denoted $\widetilde{T}_\gamma(T)$, is constructed recursively with the same topology as $T(\widehat{\mu}_{\mathcal{D}})$. At each internal node $\eta$, the split coordinate $\kappa_\eta$ is inherited from the data tree, but the data-side threshold $s_\eta$ is replaced by the source-side aligned threshold
\begin{equation}
\label{eq:qat_split}
\widetilde{s}_\eta
=
\left(F_{\gamma_\eta}^{(\kappa_\eta)}\right)^{-1}
\!\left(
F_{\widehat{\mu}_\eta}^{(\kappa_\eta)}(s_\eta)
\right),
\end{equation}
where $\gamma_\eta$ denotes the conditional distribution of $\gamma$ on the current source-side region $\widetilde{R}_\eta$. The source-side region is then partitioned as
\[
\widetilde{R}_\eta^-
=
\{z\in\widetilde{R}_\eta:z^{(\kappa_\eta)}\le\widetilde{s}_\eta\},
\qquad
\widetilde{R}_\eta^+
=
\{z\in\widetilde{R}_\eta:z^{(\kappa_\eta)}>\widetilde{s}_\eta\},
\]
and the left and right child nodes are constructed recursively.
\end{definition}

The quantile-based aligned threshold~\eqref{eq:qat_split} guarantees that the source side and the data side carry equal mass ratios at every split. Specifically, when the source-side conditional marginal is continuous,
\[
\gamma_\eta(\widetilde{R}_\eta^-)
=
\widehat{\mu}_\eta(R_\eta^-),
\qquad
\gamma_\eta(\widetilde{R}_\eta^+)
=
\widehat{\mu}_\eta(R_\eta^+).
\]
Consequently, the data tree and the aligned source tree carry equal mass at corresponding leaf nodes, naturally inducing a coupling from $\gamma$ to $\widehat{\mu}_{\mathcal{D}}$.

\begin{figure}[t]
    \centering
    \includegraphics[width=\linewidth]{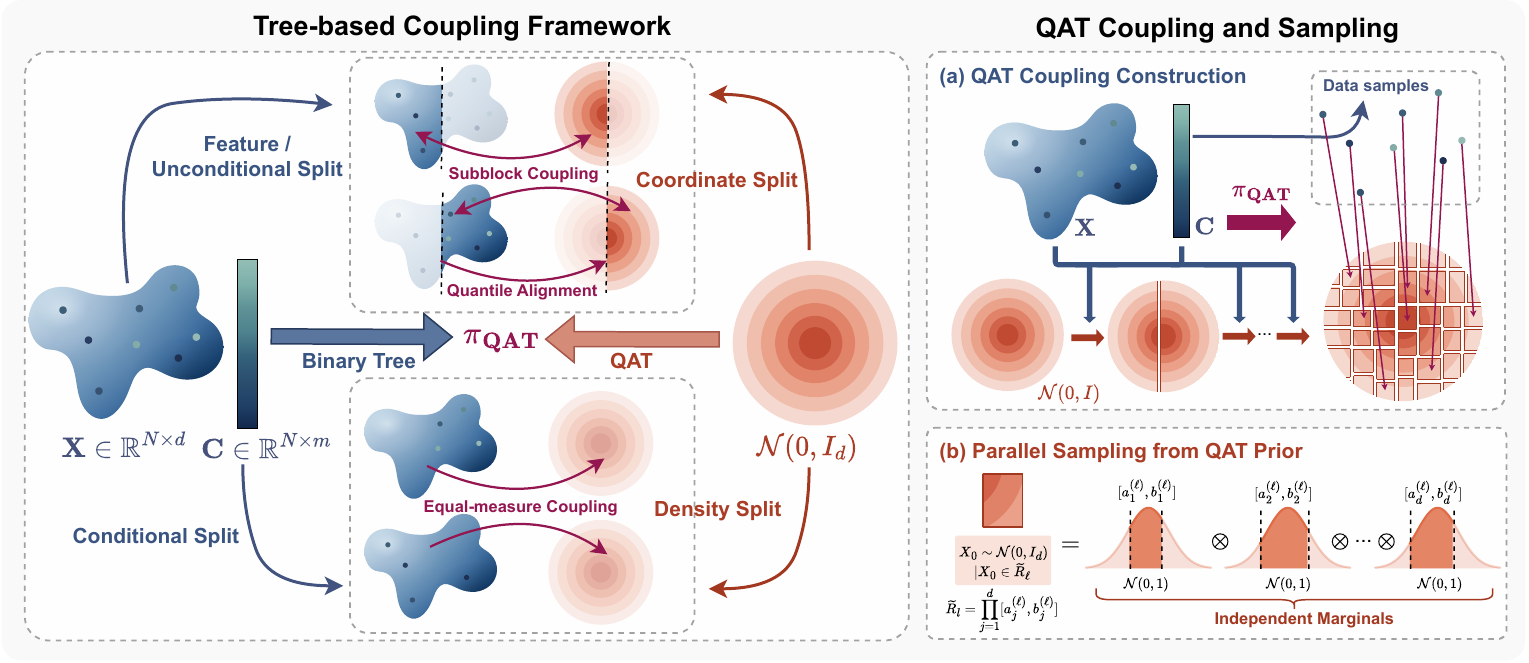}
    \caption{\footnotesize
    Overview of the QAT coupling and sampling procedure.
    (\emph{Left}) QAT builds a binary partition tree over the data using feature-based or condition-aware splits, and transfers the same tree topology to the Gaussian source by quantile alignment, yielding the structured coupling $\pi_{\mathrm{QAT}}$.
    (\emph{Right}) each data leaf is paired with an aligned Gaussian source region. Since these source regions are axis-aligned hyperrectangles, sampling from the corresponding QAT source conditional reduces to parallel per-coordinate truncated normal sampling.
    }
    \label{fig:pipeline}
\end{figure}

\begin{definition}[QAT Coupling]
\label{def:qat_coupling}
Let $\mathcal{L}$ be the nonempty leaf node set of data tree $T(\widehat{\mu}_{\mathcal{D}})$. For each leaf $\ell\in\mathcal{L}$, let $\mathcal{D}_\ell=\mathcal{D}\cap R_\ell$ and $\alpha_\ell:=\tfrac{1}{N}|\mathcal{D}_\ell|$. Let
$
\widehat{\mu}_\ell
=
|\mathcal{D}_\ell|^{-1}
\sum_{x_i\in\mathcal{D}_\ell}\delta_{x_i}
$
denote the conditional empirical measure on $\mathcal{D}_\ell$, and let $\gamma_\ell$ denote the conditional source measure associated with leaf $\ell$ in the Quantile AlignTree. Define the \emph{QAT coupling} as
\begin{equation}
\label{eq:qat_coupling}
\pi_{\mathrm{QAT}}
:=
\sum_{\ell\in\mathcal{L}}
\alpha_\ell\,\gamma_\ell\otimes\widehat{\mu}_\ell.
\end{equation}
Then $\pi_{\mathrm{QAT}}\in\Pi(\gamma,\widehat{\mu}_{\mathcal{D}})$.
\end{definition}

Intuitively, the QAT coupling matches each data leaf $\mathcal{D}_\ell$ with an aligned source region. To draw a training pair $(x_0,x_1)\sim\pi_{\mathrm{QAT}}$, one samples a leaf $\ell$ with probability $\alpha_\ell$, draws $x_0\sim\gamma_\ell$ from the corresponding conditional source distribution, and draws $x_1\sim\widehat{\mu}_\ell$ from the conditional empirical distribution on that leaf.

\begin{remark}[Gaussian Sampling]
\label{remark:gaussian}
When $\gamma=\mathcal{N}(0,I_d)$, sampling from $\gamma_\ell$ is especially convenient. Since all splits are axis-aligned, every leaf source region is an axis-aligned hyperrectangle. If leaf $\ell$ corresponds to source region $\widetilde{R}_\ell=\prod_{j=1}^d[a_j^{(\ell)},b_j^{(\ell)}]$, sampling from $\gamma_\ell$ reduces to drawing each coordinate independently from a truncated standard normal:
\[
x_0^{(j)}
\sim
\mathcal{N}(0,1)\big|_{[a_j^{(\ell)},b_j^{(\ell)}]},
\qquad j=1,\ldots,d.
\]
The interval endpoints are determined by all split constraints along the root-to-leaf path, and the sampling is fully parallelizable across coordinates.
\end{remark}

\subsection{QAT Flow Matching}
\label{subsec:qatfm}

Substituting the QAT coupling into the linear FM objective~\eqref{eq:linear-cfm} yields the QAT-FM training objective. Let $\widehat{\mu}_{\mathcal{D}}$ be the empirical data distribution, $\gamma$ the source distribution, and $\pi_{\mathrm{QAT}}\in\Pi(\gamma,\widehat{\mu}_{\mathcal{D}})$ the QAT coupling of Definition~\ref{def:qat_coupling}. QAT-FM learns the velocity field via
\begin{equation}
\label{eq:qatfm}
\mathcal{L}_{\mathrm{QAT\text{-}FM}}(\theta)
:=
\EE_{\substack{t\sim\mathrm{Unif}[0,1]\\ (x_0,x_1)\sim\pi_{\mathrm{QAT}}}}
\left\|v_\theta\bigl(t,(1-t)x_0+tx_1\bigr)-(x_1-x_0)\right\|^2.
\end{equation}
Compared to standard FM, QAT-FM differs only in the sample pairing: $\pi_{\mathrm{QAT}}$ replaces the independent coupling, while the linear interpolation path and velocity regression target remain unchanged. This structured pairing yields the following geometric properties.

\begin{theorem}[Non-Crossing Property]
\label{thm:noncrossing}
Let $\pi_{\mathrm{QAT}}\in\Pi(\gamma,\widehat{\mu}_{\mathcal{D}})$ be the QAT coupling. Consider two paired samples $(x_0,x_1),(x_0',x_1')\in\mathrm{supp}(\pi_{\mathrm{QAT}})$. If $x_1$ and $x_1'$ belong to two distinct leaf nodes $\ell\neq\ell'$ of the data tree, then the corresponding linear interpolation paths
\[
x_t=(1-t)x_0+tx_1, \qquad x_t'=(1-t)x_0'+tx_1'
\]
do not intersect, i.e., $x_t\neq x_t'$ for all $t\in[0,1]$, almost surely.
\end{theorem}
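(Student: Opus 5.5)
The plan is to locate the split in the tree that separates the two leaves and show that the linear paths stay strictly ordered along that split's coordinate for all $t\in[0,1]$. Fix two pairs $(x_0,x_1)\sim\gamma_\ell\otimes\widehat{\mu}_\ell$ and $(x_0',x_1')\sim\gamma_{\ell'}\otimes\widehat{\mu}_{\ell'}$ with $\ell\neq\ell'$. This is the situation described by Definition~\ref{def:qat_coupling}: every point of $\mathrm{supp}(\pi_{\mathrm{QAT}})$ with $x_1\in\mathcal{D}_\ell$ comes, almost surely, from the component $\alpha_\ell\,\gamma_\ell\otimes\widehat{\mu}_\ell$. Since $\ell$ and $\ell'$ are distinct leaves of the same binary tree, they have a lowest common ancestor $\eta\in\mathcal{I}$. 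One of them, say $\ell$ after relabeling, descends from the left child of $\eta$, and the other, $\ell'$, descends from the right child. Write $\kappa=\kappa_\eta$, and let $s=s_\eta$ and $\widetilde{s}=\widetilde{s}_\eta$ be the data-side and source-side thresholds from \eqref{eq:qat_split}.

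\textbf{Step 1 (inheritance of constraints).} Regions are refined recursively, so $R_\ell\subseteq R_\eta^-$ and $R_{\ell'}\subseteq R_\eta^+$ on the data side. Because the Quantile AlignTree of Definition~\ref{def:qat} has the same topology, the source side satisfies $\widetilde{R}_\ell\subseteq\widetilde{R}_\eta^-$ and $\widetilde{R}_{\ell'}\subseteq\widetilde{R}_\eta^+$. Therefore $x_1^{(\kappa)}\le s<x_1'^{(\kappa)}$ holds deterministically. Since $\gamma_\ell$ and $\gamma_{\ell'}$ are $\gamma$ conditioned on $\widetilde{R}_\ell$ and $\widetilde{R}_{\ell'}$, we also have $x_0^{(\kappa)}\le\widetilde{s}<x_0'^{(\kappa)}$ almost surely.

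\textbf{Step 2 (ordering is preserved along the path).} For every $t\in[0,1]$,
\[
x_t'^{(\kappa)}-x_t^{(\kappa)}=(1-t)\bigl(x_0'^{(\kappa)}-x_0^{(\kappa)}\bigr)+t\bigl(x_1'^{(\kappa)}-x_1^{(\kappa)}\bigr).
\]
This is a convex combination of two strictly positive numbers, so it is strictly positive. Hence $x_t\neq x_t'$ for all $t$, with no further conditions needed.

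\textbf{Main obstacle.} The delicate point is the phrase ``$\in\mathrm{supp}(\pi_{\mathrm{QAT}})$'' rather than the geometry. The topological support of $\gamma_\ell$ is the closure of $\widetilde{R}_\ell$, which includes the boundary hyperplane $\{z^{(\kappa)}=\widetilde{s}\}$ shared with the sibling side. At such a point $x_0^{(\kappa)}=x_0'^{(\kappa)}=\widetilde{s}$ is possible, and the strict inequality at $t=0$ could fail. I would handle this by noting that for $\gamma=\mathcal{N}(0,I_d)$, or any $\gamma$ with continuous coordinate marginals, the finitely many split hyperplanes have $\gamma$-measure zero. This is exactly where the ``almost surely'' qualifier is used. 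A related subtlety is that a left child whose source region has zero $\gamma$-mass never occurs: nonempty leaves carry mass $\alpha_\ell>0$, which is inherited through the mass-ratio equality stated after Definition~\ref{def:qat}. Moreover, even on the boundary the $t=1$ inequality stays strict, so for $t\in(0,1]$ the paths never meet. Only the endpoint $t=0$ requires the measure-zero exclusion.
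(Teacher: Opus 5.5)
Your proof is correct and follows the paper's argument essentially step for step. Like the paper, you take the first divergence node $\eta$ (the lowest common ancestor), inherit $x_1^{(\kappa_\eta)}\le s_\eta<(x_1')^{(\kappa_\eta)}$ and $x_0^{(\kappa_\eta)}\le\widetilde{s}_\eta<(x_0')^{(\kappa_\eta)}$, and note that the $\kappa_\eta$-th coordinate gap is a convex combination of two positive numbers. Your additional remark is a legitimate refinement the paper skips: since $\mathrm{supp}(\pi_{\mathrm{QAT}})$ involves the closure of $\widetilde{R}_\ell$, only $t=0$ can fail, and only on a $\gamma$-null split hyperplane, which is exactly what justifies the ``almost surely'' qualifier that the paper covers by simply asserting $x_0\in\widetilde{R}_{\eta^-}$.
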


Theorem~\ref{thm:noncrossing} establishes that paths associated with distinct target leaves are strictly non-crossing. This property alleviates the velocity ambiguity that can arise from overlapping training paths: since the FM objective fits $v_\theta$ pointwise at each $t\in[0,1]$, two training paths satisfying $x_t=x_t'$ but $x_1-x_0\neq x_1'-x_0'$ impose conflicting supervision signals at the same spatial location, increasing the difficulty of fitting $v_\theta$. Paths that remain well-separated at intermediate times are therefore desirable. Beyond preventing cross-leaf path crossings, the QAT coupling also consistently improves the expected inter-path distance, as the following theorem quantifies.

\begin{theorem}[Improved Path Separation]
\label{thm:path-separation}
Let $\gamma=\mathcal{N}(0,I_d)$, let $\pi_{\mathrm{QAT}}\in\Pi(\gamma,\widehat{\mu}_{\mathcal{D}})$ be the QAT coupling, and let $\pi_{\mathrm{ind}}:=\gamma\otimes\widehat{\mu}_{\mathcal{D}}$ be the independent coupling. For any coupling $\pi$, let $\EE_{\pi^{\otimes 2}}$ denote expectation over two independent pairs $(x_0,x_1),(x_0',x_1')\sim\pi$, and define $x_t=(1-t)x_0+tx_1$ and $x_t'=(1-t)x_0'+tx_1'$. Then for every $t\in(0,1)$,
\[
\EE_{\pi_{\mathrm{QAT}}^{\otimes 2}}\!\left[\|x_t-x_t'\|_2^2\right]
-
\EE_{\pi_{\mathrm{ind}}^{\otimes 2}}\!\left[\|x_t-x_t'\|_2^2\right]
=
4t(1-t)\,\Delta_{\mathrm{QAT}},
\]
where $\Delta_{\mathrm{QAT}}$ is the node-level path-separation gain induced by the QAT tree:
\begin{equation}
\label{eq:qat_lb}
\Delta_{\mathrm{QAT}}
:=
\sum_{\eta\in\mathcal{I}}
\alpha_\eta q_\eta(1-q_\eta)
\Bigl(\EE_{\gamma_{\eta^+}}\!\bigl[X^{(\kappa_\eta)}\bigr] - \EE_{\gamma_{\eta^-}}\!\bigl[X^{(\kappa_\eta)}\bigr]\Bigr)
\Bigl(\EE_{\widehat{\mu}_{\eta^+}}\!\bigl[X^{(\kappa_\eta)}\bigr] - \EE_{\widehat{\mu}_{\eta^-}}\!\bigl[X^{(\kappa_\eta)}\bigr]\Bigr).
\end{equation}
Here $\alpha_\eta:=\widehat{\mu}_{\mathcal{D}}(R_\eta)$ is the data mass at node $\eta$, $q_\eta:=F_{\widehat{\mu}_\eta}^{(\kappa_\eta)}(s_\eta)$ is the conditional left-child mass fraction at split coordinate $\kappa_\eta$, and $\gamma_{\eta^\pm}$, $\widehat{\mu}_{\eta^\pm}$ denote the conditional source and empirical distributions on the left and right child regions of $\eta$. Under non-degenerate splits, $\Delta_{\mathrm{QAT}}>0$, so the QAT coupling strictly dominates the independent coupling in expected inter-path separation.
\end{theorem}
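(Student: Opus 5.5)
Expanding $\|x_t-x_t'\|^2$ for two independent pairs gives
\[
\EE_{\pi^{\otimes 2}}\|x_t-x_t'\|^2 = 2\EE_\pi\|x_t\|^2 - 2\|\EE_\pi x_t\|^2 .
\]
The means $\EE_\pi x_t=(1-t)\EE_\gamma X+t\EE_{\widehat\mu_{\mathcal D}}X$ depend only on the marginals. By Definition~\ref{def:qat_coupling} both $\pi_{\mathrm{QAT}}$ and $\pi_{\mathrm{ind}}$ lie in $\Pi(\gamma,\widehat\mu_{\mathcal D})$, so the mean terms cancel in the difference. Expanding the second moment,
\[
\EE_\pi\|x_t\|^2=(1-t)^2\EE_\gamma\|X\|^2+t^2\EE_{\widehat\mu_{\mathcal D}}\|X\|^2+2t(1-t)\EE_\pi\langle x_0,x_1\rangle ,
\]
and the first two terms are again fixed by the marginals. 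Hence the difference equals
\[
4t(1-t)\bigl(\EE_{\pi_{\mathrm{QAT}}}\langle x_0,x_1\rangle-\langle \EE_\gamma X,\EE_{\widehat\mu_{\mathcal D}}X\rangle\bigr),
\]
and with $\gamma=\mathcal N(0,I_d)$ the subtracted term vanishes. It remains to prove that $\EE_{\pi_{\mathrm{QAT}}}\langle x_0,x_1\rangle=\Delta_{\mathrm{QAT}}$, i.e.\ that the cross-covariance of the coupling equals the node-level sum.

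Next, the structure of \eqref{eq:qat_coupling} gives
\[
\EE_{\pi_{\mathrm{QAT}}}\langle x_0,x_1\rangle=\sum_{\ell}\alpha_\ell\langle m^\gamma_\ell,m^\mu_\ell\rangle,
\]
where $m^\gamma_\eta:=\EE_{\gamma_\eta}X$ and $m^\mu_\eta:=\EE_{\widehat\mu_\eta}X$. I would decompose this sum over the tree by a martingale (law of total covariance) telescoping. For each internal node $\eta$ with children $\eta^\pm$, the mass alignment from \eqref{eq:qat_split} gives $\alpha_{\eta^-}=\alpha_\eta q_\eta$ and $\alpha_{\eta^+}=\alpha_\eta(1-q_\eta)$ on both sides, and the means satisfy $m_\eta=q_\eta m_{\eta^-}+(1-q_\eta)m_{\eta^+}$ for both $\gamma$ and $\widehat\mu$. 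A direct two-point computation then yields
\[
\alpha_{\eta^-}\langle m^\gamma_{\eta^-},m^\mu_{\eta^-}\rangle+\alpha_{\eta^+}\langle m^\gamma_{\eta^+},m^\mu_{\eta^+}\rangle-\alpha_\eta\langle m^\gamma_\eta,m^\mu_\eta\rangle
=\alpha_\eta q_\eta(1-q_\eta)\langle m^\gamma_{\eta^+}-m^\gamma_{\eta^-},\,m^\mu_{\eta^+}-m^\mu_{\eta^-}\rangle .
\]
Summing over all internal nodes telescopes from the root, where $m^\gamma_r=0$, to the leaves, so $\sum_\ell\alpha_\ell\langle m^\gamma_\ell,m^\mu_\ell\rangle$ equals the sum of these node increments.

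The main obstacle is reducing the inner product in each increment to the single coordinate $\kappa_\eta$, as \eqref{eq:qat_lb} requires. For the Gaussian this works. The region $\widetilde R_\eta$ is a hyperrectangle and $\gamma$ is a product measure, so $\gamma_\eta$ is a product of truncated normals. The split on $\kappa_\eta$ only changes that factor, so $m^\gamma_{\eta^+}-m^\gamma_{\eta^-}$ is supported on coordinate $\kappa_\eta$. The inner product therefore collapses to the $\kappa_\eta$-th coordinate product, with no assumption needed on the data side. Every step above relies on continuity of the truncated Gaussian marginals, which makes the aligned mass ratios exact.

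For strict positivity, a non-degenerate split has $q_\eta\in(0,1)$. On the source side, $\EE_{\gamma_{\eta^+}}X^{(\kappa_\eta)}>\widetilde s_\eta\ge\EE_{\gamma_{\eta^-}}X^{(\kappa_\eta)}$. On the data side, every point of $\mathcal D_{\eta^+}$ has $\kappa_\eta$-coordinate above $s_\eta$ and every point of $\mathcal D_{\eta^-}$ at most $s_\eta$, so its difference of means is also positive. Each summand is thus nonnegative, and it is positive at every non-degenerate split, which gives $\Delta_{\mathrm{QAT}}>0$.
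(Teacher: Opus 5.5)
Your proof is correct and follows essentially the paper's route: the paper's two-pair quantity $\Gamma_\eta=\EE_\eta[(X_0-X_0')^\top(X_1-X_1')]$ equals $2(\EE_\eta\langle X_0,X_1\rangle-\langle m^\gamma_\eta,m^\mu_\eta\rangle)$, so its weighted recursion $\alpha_\eta\Gamma_\eta=\alpha_{\eta^-}\Gamma_{\eta^-}+\alpha_{\eta^+}\Gamma_{\eta^+}+2\alpha_\eta q_\eta(1-q_\eta)\Delta_\eta$ is equivalent to your node-increment identity, and your reduction to coordinate $\kappa_\eta$ via the product structure of the truncated Gaussian is the paper's. Your single-pair bookkeeping (variance identity plus the leaf sum $\sum_\ell\alpha_\ell\langle m^\gamma_\ell,m^\mu_\ell\rangle$) only sidesteps the paper's case analysis on which children the two pairs land in, and your strict-positivity argument (conditional means straddling $\widetilde{s}_\eta$ and $s_\eta$) is more explicit than the paper's, which merely asserts $\Delta_\eta\ge 0$.
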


\subsection{Conditional QAT Strategy}
\label{sec:conditional}

In conditional generation tasks, data samples carry conditioning variables. Let $\{(x_i,c_i)\}_{i=1}^N\subseteq\RR^d\times\RR^m$ be the joint dataset, $\widehat{\mu}_{\mathcal{D}_C}:=N^{-1}\sum_{i=1}^N\delta_{(x_i,c_i)}$ the joint empirical distribution, and $\widehat{\mu}_C:=N^{-1}\sum_{i=1}^N\delta_{c_i}$ the marginal over condition space. A standard OT coupling based only on data-space distances can ignore the alignment between samples and their conditioning variables. Alternatively, solving OT directly between the joint source $\gamma\otimes\widehat{\mu}_C$ and the joint empirical distribution $\widehat{\mu}_{\mathcal{D}_C}$ increases the optimization dimension and computational cost~\citep{cheng2025curse,mousavi2026flow}. We therefore propose the following Conditional QAT strategy.

\begin{definition}[Conditional QAT]
\label{def:conditional_qat}
Let $\{c_i\}_{i=1}^N\subseteq\RR^m$ be conditioning variables. Conditional QAT constructs a data tree on the joint samples $\{(x_i,c_i)\}_{i=1}^N\subseteq\RR^{d+m}$. For each internal node $\eta$, let $\kappa_\eta\in[d+m]$ denote its split coordinate. When building the source-side AlignTree, if $\kappa_\eta\le d$ (a data coordinate), the source side is partitioned using the standard QAT aligned threshold in~\eqref{eq:qat_split}; if $\kappa_\eta>d$ (a condition coordinate), no split is applied to the source and both child nodes inherit the parent source region, the split is used only to refine the target-side grouping: $\widetilde{R}_{\eta^-}=\widetilde{R}_{\eta^+}=\widetilde{R}_\eta$.
\end{definition}

The Conditional QAT coupling takes the same form as Definition~\ref{def:qat_coupling}, with the target empirical distribution replaced by the joint distribution $\widehat{\mu}_{\mathcal{D}_C}$:
\[
\pi_{\mathrm{CQAT}}
=
\sum_{\ell\in\mathcal{L}_C}
\alpha_\ell\,\gamma_\ell\otimes\widehat{\mu}_{\mathcal{D}_C,\ell}
\in
\Pi(\gamma,\widehat{\mu}_{\mathcal{D}_C}),
\]
where $\mathcal{L}_C$ is the set of nonempty leaves of the joint data tree, and $\widehat{\mu}_{\mathcal{D}_C,\ell}$ is the conditional joint empirical distribution at leaf $\ell$. 
A sample from $\pi_{\mathrm{CQAT}}$ can be written as $(x_0,(x_1,c))$, where the linear FM path is applied only in the data space between $x_0$ and $x_1$, while $c$ is provided as conditioning input to the velocity model. 
By encoding conditional structure through leaf-level matching in the joint tree, this construction avoids explicitly solving the product-space OT problem between $\gamma\otimes\widehat{\mu}_C$ and $\widehat{\mu}_{\mathcal{D}_C}$, while retaining the efficient sampling form of unconditional QAT.

\begin{remark}[Theoretical extensions]
\label{remark:cqat_extensions}
Conditional QAT inherits the theoretical properties of QAT under appropriate conditions. Specifically, if the root-to-$\ell$ and root-to-$\ell'$ paths first diverge at a data-coordinate split node (i.e., $\kappa_\eta\le d$), then the linear interpolation paths drawn from the corresponding leaf couplings satisfy the non-crossing property of Theorem~\ref{thm:noncrossing}. Under the same non-degeneracy conditions, Conditional QAT also inherits the path-separation bound of Theorem~\ref{thm:path-separation}. Proofs and further details are given in Appendix~\ref{subsec:cqat_properties}.
\end{remark}

\subsection{Efficient QAT Construction and Sampling}
\label{subsec:alg}

When $\gamma=\mathcal{N}(0,I_d)$, each source-side leaf region $\widetilde{R}_\ell$ is fully characterized by its coordinate-wise bounds $(a_\ell,b_\ell)$ (Remark~\ref{remark:gaussian}). Regardless of whether the data tree is built from unconditional or conditional samples, each leaf $\ell$ stores only the data index set $\mathcal{S}_\ell$ and the source-side bounds $(a_\ell,b_\ell)$, which fully specify the leaf-wise product coupling $\gamma_\ell\otimes\widehat{\mu}_\ell$. Training pairs are drawn by sampling a leaf $\ell$ proportionally to its mass, then independently drawing a source point from the corresponding truncated Gaussian region and a target point from $\mathcal{S}_\ell$. The full construction, sampling algorithm, and complexity analysis are given in Appendix~\ref{sec:alg}.

\paragraph{Split Strategy.}
QAT is compatible with a variety of split strategies. Throughout this work we adopt a maximum-variance rule: for each node $\eta$, the split coordinate is the dimension of highest empirical variance and the split threshold is the empirical mean along that coordinate,
\[
\kappa_\eta
=
\arg\max_j\operatorname{Var}_{\widehat{\mu}_\eta}\!\big(Y^{(j)}\big),
\qquad
s_\eta
=
\EE_{\widehat{\mu}_\eta}\!\big[Y^{(\kappa_\eta)}\big],
\]
where $Y=X$ in the unconditional case and $Y=(X,C)$ in the conditional case. For discrete empirical data, we recurse until each leaf contains at most one sample; in the conditional setting, the terminal splitting phase prioritizes data coordinates over condition coordinates, maximizing the number of source-side splits and thereby increasing the path-separation gain of Theorem~\ref{thm:path-separation}.

\paragraph{High-Dimensional Data Handling.}
For high-dimensional data, axis-aligned partitioning in the raw data space may fail to capture the principal directions of variation.
We therefore first rotate the data via an orthogonal transform $P\in\RR^{d\times d}$, setting $z_i=Px_i$ and constructing the QAT coupling on the top-$k$ subspace $z_i^{(1:k)}$. By the rotational invariance of the standard Gaussian, $P_\#\mathcal{N}(0,I_d)=\mathcal{N}(0,I_d)$ for any orthogonal $P$. Training then proceeds by sampling $(\epsilon^{(1:k)},z_i^{(1:k)})\sim\pi_{\mathrm{QAT}}$, independently drawing $\epsilon^{(-k)}\sim\mathcal{N}(0,I_{d-k})$, and setting
\[
x_0=P^\top\!\big(\epsilon^{(1:k)},\epsilon^{(-k)}\big).
\]
The marginal distribution of $x_0$ remains $\mathcal{N}(0,I_d)$, while the QAT tree operates entirely within the low-dimensional subspace. In practice, $P$ is composed of a Patch-Hadamard transform and a PCA rotation at the mean-pooling scale, concentrating the principal data variation into a small number of coordinates. Details are given in Appendix~\ref{subsec:hd_alg}.

\section{Experiments}
\label{sec:exp}

We evaluate QAT-FM through a series of experiments: we begin with path visualizations on two-dimensional synthetic data, then proceed to high-dimensional image generation benchmarks covering unconditional generation (CIFAR-10), class-conditional generation (ImageNet-32), text-conditional generation (CelebA-64), and large-scale latent-space generation (ImageNet-256). Additional implementation details are provided in Appendix~\ref{sec:imple}; extended experimental results appear in Appendix~\ref{sec:exp_add}.

\subsection{2D Synthetic Experiments}
\label{subsec:2d}

\begin{figure}[t]
    \centering
    \includegraphics[width=\linewidth]{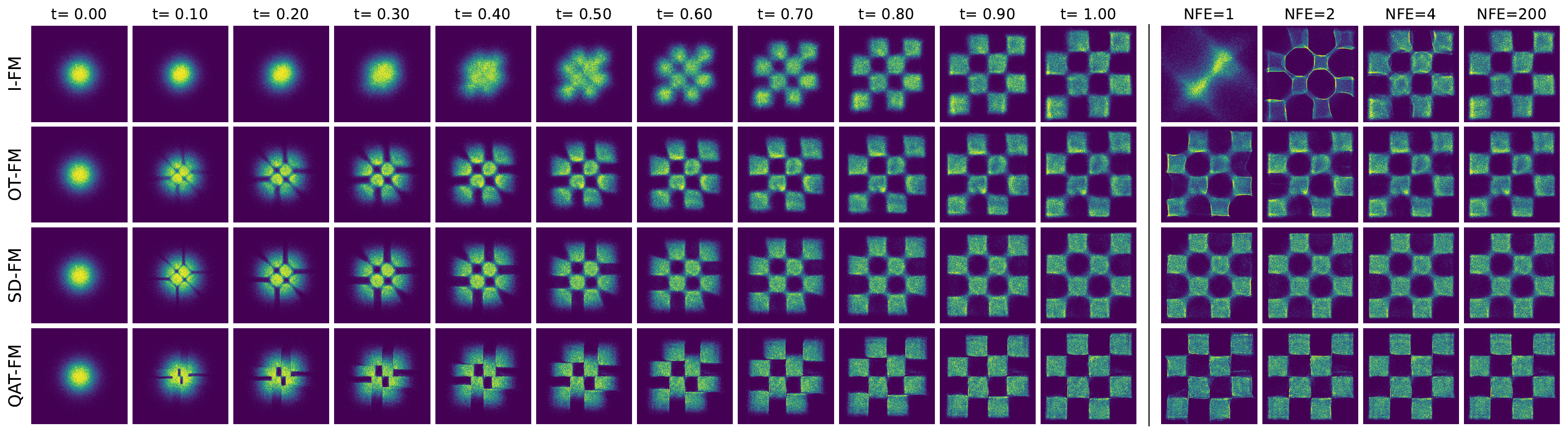}
    \caption{\footnotesize 
    \textbf{2D checkerboard results.}
    (\emph{Left}) sample evolution induced by different coupling methods from $t=0$ to $t=1$.
    (\emph{Right}) generated samples with varying numbers of function evaluations (NFE).
    }
    \label{fig:2d_results}
\end{figure}

Following~\citet{lipman2022flow,pooladian2023multisample}, we compare different coupling methods on two-dimensional checkerboard data. Four methods are evaluated: I-FM, OT-FM~\citep{tong2024improving}, SD-FM~\citep{mousavi2026flow}, and the proposed QAT-FM. All methods share the same network architecture and training configuration.

Figure~\ref{fig:2d_results} (left) shows sample evolution from $t=0$ to $t=1$. OT-FM and SD-FM capture the global structure of the target distribution earlier than I-FM, while QAT-FM yields better local separation and more organized intermediate transport. Figure~\ref{fig:2d_results} (right) shows generated samples with varying NFE. QAT-FM recovers a clear checkerboard structure at small NFE, indicating that its structured coupling improves few-step sample quality.

\subsection{Unconditional Generation on CIFAR-10}
\label{sec:cifar10}

We evaluate QAT-FM on unconditional generation using CIFAR-10~\citep{krizhevsky2009learning}, which contains $50{,}000$ RGB training images of size $32\times32$ across $10$ categories. The velocity-field network follows the architecture of~\citet{tong2024improving}; training details are provided in Appendix~\ref{sec:imple}. We generate $50{,}000$ samples using several numerical solvers and compute FID~\citep{heusel2017gans}. Results are averaged over $5$ random seeds and reported with standard deviations.

\begin{figure}[ht]
\centering

\begin{minipage}[t]{0.62\linewidth}
\centering
\captionof{table}{FID results on unconditional CIFAR-10 generation at $100$k training steps.}
\label{tab:fid_cifar10_short}
\small
\setlength{\tabcolsep}{4pt}
\renewcommand{\arraystretch}{1.05}
\resizebox{\linewidth}{!}{
\begin{tabular}{lcccc}
\toprule
Method & Euler-2 & Euler-100 & Dopri5 & NFE$\downarrow$ \\
\midrule
I-FM
& 164.18$_{\pm \text{0.13}}$
& \textbf{3.79}$_{\pm \textbf{0.04}}$
& \underline{3.38$_{\pm \text{0.03}}$}
& 137.30$_{\pm \text{0.30}}$ \\
OT-FM
& \textbf{77.26}$_{\pm \textbf{0.12}}$
& 3.96$_{\pm \text{0.04}}$
& 3.49$_{\pm \text{0.03}}$
& \textbf{129.52}$_{\pm \textbf{0.86}}$ \\
SD-FM
& 162.66$_{\pm \text{0.23}}$
& 4.03$_{\pm \text{0.02}}$
& 3.61$_{\pm \text{0.01}}$
& \underline{135.78$_{\pm \text{0.36}}$} \\
\rowcolor{gray!15}
QAT-FM
& \underline{98.68$_{\pm \text{0.17}}$}
&  \underline{3.91}$_{\pm \text{0.04}}$
& \textbf{3.26}$_{\pm \textbf{0.02}}$
& 145.55$_{\pm \text{1.05}}$ \\
\bottomrule
\end{tabular}
}
\end{minipage}
\hfill
\begin{minipage}[t]{0.37\linewidth}
\centering
\vspace{0pt}
\IfFileExists{figures/CIFAR10_step_fid.pdf}{%
  \includegraphics[width=\linewidth]{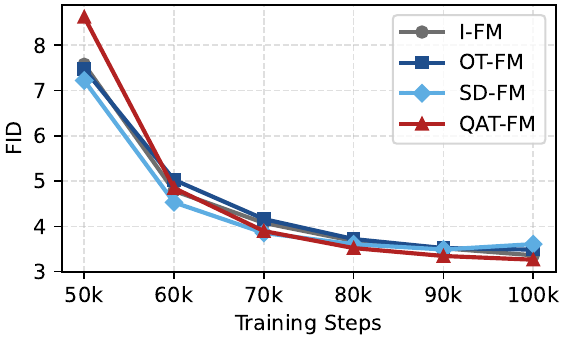}%
}{%
  \fbox{\parbox[c][0.6\linewidth][c]{\linewidth}{\centering\small[Figure: FID vs.\ steps]}}%
}
\caption{\footnotesize FID (Dopri5) vs. steps.}
\label{fig:cifar10_step_fid}
\end{minipage}

\end{figure}
Table~\ref{tab:fid_cifar10_short} reports CIFAR-10 results at $100$k training steps. Under few-step Euler sampling, OT-FM performs best, while QAT-FM substantially improves over I-FM and SD-FM, indicating that structured coupling can improve generation quality under coarse numerical integration. Under the adaptive {Dopri5} solver, QAT-FM achieves the lowest FID, showing competitive final generation quality in the unconditional setting. Figure~\ref{fig:cifar10_step_fid} further shows that QAT-FM maintains strong performance during late-stage training.

\paragraph{Ablation Studies.}
Our main experiments use patch size $p=4$, PCA dimension $k=32$, and mean-based tree splitting. To assess the sensitivity of these choices, we conduct three ablation studies: (1) varying the PCA dimension $k\in\{32,64,128\}$; (2) varying the patch size $p\in\{1,2,4\}$; and (3) comparing three split strategies---mean split, median split, and random split with ratio $\sim\mathcal{U}(0.25,0.75)$. Results are summarized in Table~\ref{tab:ablation}.
\begin{table}[ht]
\centering
\caption{\footnotesize Ablation studies on PCA dimension, patch size, and tree splitting strategy.}
\label{tab:ablation}
\small
\setlength{\tabcolsep}{4pt}
\renewcommand{\arraystretch}{1.05}

\begin{minipage}[t]{0.31\linewidth}
\centering
\begin{tabular}{lcc}
\toprule
PCA Dim. & FID$\downarrow$ & NFE$\downarrow$ \\
\midrule
$k=32$  & 3.26 & 145.48 \\
$k=64$  & 3.27 & 143.50 \\
$k=128$ & 3.24 & 143.76 \\
\bottomrule
\end{tabular}
\end{minipage}
\hfill
\begin{minipage}[t]{0.31\linewidth}
\centering
\begin{tabular}{lcc}
\toprule
Patch Size & FID$\downarrow$ & NFE$\downarrow$ \\
\midrule
$p=1$ & 3.24 & 145.36 \\
$p=2$ & 3.24 & 145.55 \\
$p=4$ & 3.26 & 145.48 \\
\bottomrule
\end{tabular}
\end{minipage}
\hfill
\begin{minipage}[t]{0.31\linewidth}
\centering
\begin{tabular}{lcc}
\toprule
Split Rule & FID$\downarrow$ & NFE$\downarrow$ \\
\midrule
Mean   & 3.26 & 145.48 \\
Median & 3.22 & 143.89 \\
Random & 3.24 & 144.87 \\
\bottomrule
\end{tabular}
\end{minipage}

\end{table}

As shown in Table~\ref{tab:ablation}, QAT-FM is insensitive to the choice of PCA dimension, patch size, and tree splitting strategy; FID and NFE vary only slightly across all configurations. These results suggest that the performance gains of QAT-FM do not rely on carefully tuned tree construction hyperparameters, but stem primarily from the global tree-structured quantile alignment mechanism.

\subsection{Conditional Generation Experiments}

\label{subsec:cond_gen}

\paragraph{Class-conditional generation.}
We evaluate QAT-FM on class-conditional generation using ImageNet-1k~\citep{deng2009imagenet}, which contains $1{,}000$ categories and over $1$M training images. All images are resized to $32\times32$, with random horizontal flipping used for augmentation. We build the Conditional QAT coupling using both original and horizontally flipped training images. At evaluation, we generate $50$K samples and compute FID against the training set. Results are averaged over $5$ random seeds and reported with standard deviations.

\begin{table}[ht]
\centering
\caption{\footnotesize FID results on class-conditional ImageNet-$32\times32$ generation.}
\label{tab:fid_imagenet32}
\small
\setlength{\tabcolsep}{4pt}
\resizebox{\linewidth}{!}{
\begin{tabular}{lcccccccc}
\toprule
Method & {Euler-2} & {Euler-5} & {Euler-10} & {Euler-20} & {Euler-50} & {Euler-100} & {Dopri5} & NFE$\downarrow$ \\
\midrule
I-FM 
& 122.40$_{\pm \text{0.08}}$
& {23.04$_{\pm \text{0.07}}$}
& \underline{10.12$_{\pm \text{0.07}}$}
& \underline{6.48$_{\pm \text{0.07}}$}
& \underline{4.77$_{\pm \text{0.07}}$}
& \underline{4.25$_{\pm \text{0.06}}$}
& \underline{3.82$_{\pm \text{0.04}}$}
& \textbf{140.27}$_{\pm \textbf{1.99}}$ \\

OT-FM 
& {115.63$_{\pm \text{0.42}}$}
& 31.82$_{\pm \text{0.11}}$
& 15.79$_{\pm \text{0.02}}$
& 10.59$_{\pm \text{0.00}}$
& 8.14$_{\pm \text{0.02}}$
& 7.37$_{\pm \text{0.00}}$
& 6.65$_{\pm \text{0.02}}$
& \underline{142.51$_{\pm \text{0.45}}$} \\

C2OT 
& \textbf{100.41$_{\pm \textbf{0.05}}$}
& \textbf{20.94}$_{\pm \textbf{0.08}}$
& \textbf{10.10}$_{\pm \textbf{0.06}}$
& 6.80$_{\pm \text{0.04}}$
& 5.15$_{\pm \text{0.04}}$
& 4.62$_{\pm \text{0.04}}$
& 4.16$_{\pm \text{0.02}}$
& {143.97$_{\pm \text{0.75}}$} \\

\rowcolor{gray!15}
QAT-FM 
& \underline{105.16$_{\pm \text{0.30}}$}
& \underline{22.88$_{\pm \textbf{0.05}}$}
& {10.30$_{\pm \text{0.08}}$}
& \textbf{6.40}$_{\pm \textbf{0.07}}$
& \textbf{4.54}$_{\pm \textbf{0.07}}$
& \textbf{3.99}$_{\pm \textbf{0.07}}$
& \textbf{3.56}$_{\pm \textbf{0.06}}$
& 143.35$_{\pm \text{2.22}}$ \\
\bottomrule
\end{tabular}
}
\end{table}

Table~\ref{tab:fid_imagenet32} reports FID on class-conditional ImageNet-$32\times32$ generation. Both C2OT and QAT-FM improve few-step generation over I-FM, demonstrating the effectiveness of condition-aware coupling. While C2OT performs best under small Euler step counts, its advantage reverses as the solver becomes more accurate. By contrast, QAT-FM remains competitive in few-step sampling and achieves the best performance under larger-step Euler solvers and Dopri5. This indicates that Conditional QAT provides more stable supervision for high-quality conditional FM training.

\paragraph{Text-conditional generation.}
We further evaluate QAT-FM on text-conditional CelebA-$64\times64$ generation. CelebA~\citep{liu2015deep} contains over $200$K celebrity face images with attribute annotations, and CelebA-Dialog~\citep{jiang2021talk} provides fine-grained natural-language descriptions for these images. We use the text captions as conditioning inputs. Following C2OT~\citep{cheng2025curse}, each caption is encoded by a CLIP~\citep{radford2021learning}-like DFN text encoder~\citep{fang2024data}, and the resulting embedding is used as the condition variable. We compare QAT-FM with I-FM, OT-FM, and C2OT under the same text-conditioning setting. At evaluation, we generate $50$K samples and report FID for image fidelity and SigLIP-2~\citep{tschannen2025siglip2} CLIP score for text-image alignment.

\begin{figure}[ht]
\centering

\begin{minipage}[t]{0.45\linewidth}
\centering
\vspace{0pt}
\small
\setlength{\tabcolsep}{3pt}
\renewcommand{\arraystretch}{1.05}
\resizebox{\linewidth}{!}{
\begin{tabular}{clcccc}
\toprule
 & Method & Euler-2 & Euler-100 & Dopri5 & NFE$\downarrow$ \\
\midrule
\multirow{4}{*}{\rotatebox[origin=c]{90}{FID$\downarrow$}}
& I-FM
& 82.40
& 3.60
& 3.13
& 149.97 \\
& OT-FM
& \underline{70.24}
& \underline{3.42}
& \underline{2.98}
& \underline{146.53} \\
& C2OT
& {81.39}
& {3.50}
& {3.08}
& {148.75} \\
& \cellcolor{gray!15}QAT-FM
& \cellcolor{gray!15}\textbf{61.84}
& \cellcolor{gray!15}\textbf{3.17}
& \cellcolor{gray!15}\textbf{2.68}
& \cellcolor{gray!15}\textbf{144.08} \\
\midrule
\multirow{4}{*}{\rotatebox[origin=c]{90}{CLIP$\uparrow$}}
& I-FM
& \textbf{0.11}
& \textbf{0.10}
& \textbf{0.10}
& 149.97 \\
& OT-FM
& 0.09
& 0.08
& 0.08 
& \underline{146.53} \\
& C2OT
& \textbf{0.11}
& \textbf{0.10}
& \textbf{0.10}
& {148.75} \\
& \cellcolor{gray!15}QAT-FM
& \cellcolor{gray!15}\textbf{0.11}
& \cellcolor{gray!15}\textbf{0.10}
& \cellcolor{gray!15}\textbf{0.10}
& \cellcolor{gray!15}\textbf{144.08} \\
\bottomrule
\end{tabular}}
\vspace{1pt}

{\footnotesize \textbf{(a)} Quantitative results.}
\end{minipage}
\hfill
\begin{minipage}[t]{0.53\linewidth}
\centering
\vspace{0pt}
\includegraphics[width=\linewidth]{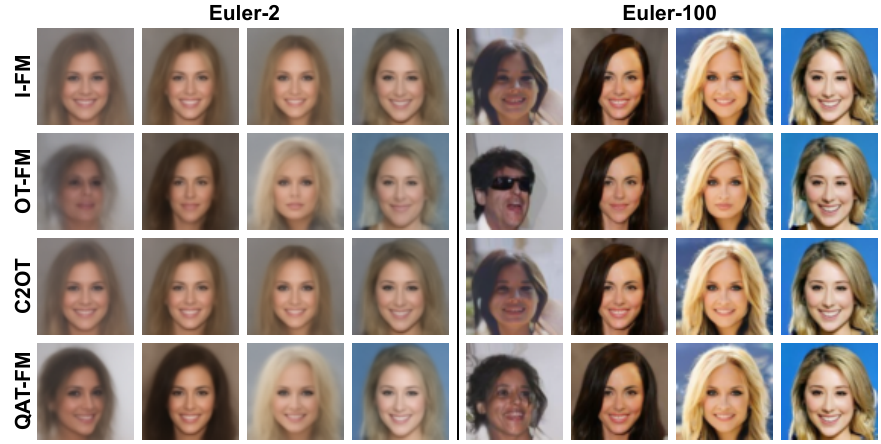}

{\footnotesize \textbf{(b)} Qualitative samples.}
\end{minipage}
\caption{\footnotesize 
Text-conditional generation results on CelebA-$64\times64$.
(a) Quantitative comparison in terms of FID and CLIP score under different solvers.
(b) Qualitative samples generated by Euler-2 and Euler-100 using the caption:
\textit{``The full face of this female is beamed with happiness. This person looks very young and has no bangs. There is not any glasses on her face.''}
}
\label{fig:celeba_quant_qual}

\end{figure}

Figure~\ref{fig:celeba_quant_qual} shows that QAT-FM consistently outperforms I-FM, OT-FM, and C2OT in FID across different solvers, while also requiring the lowest average NFE. Meanwhile, QAT-FM attains the same CLIP scores as I-FM and C2OT, and higher CLIP scores than OT-FM, indicating that the improvement in image fidelity does not come at the cost of weaker text-image alignment. The qualitative samples further show that QAT-FM produces more coherent faces under Euler-2 and sharper details under Euler-100, consistent with the quantitative results.

\begin{figure}[t]
    \centering
    \includegraphics[width=.9\linewidth]{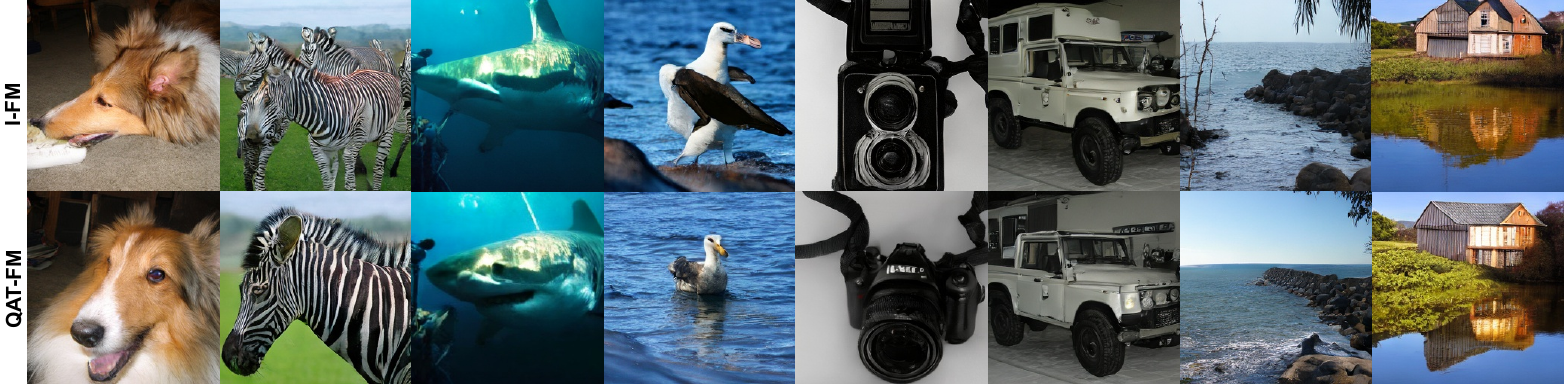}
    \caption{\footnotesize
    Qualitative comparison between I-FM and QAT-FM on ImageNet-$256\times256$ class-conditional generation using Euler-250 sampling. Each column uses the same input condition.
    }
    \label{fig:lightingdit}
\end{figure}

\begin{wraptable}[13]{r}{0.5\textwidth}
\centering
\caption{\footnotesize Class-conditional generation results on ImageNet-$256\times256$ latent generation.}
\label{tab:euler_results}
\small
\setlength{\tabcolsep}{3pt}
\resizebox{\linewidth}{!}{
\begin{tabular}{lcccccc}
\toprule
Method & Steps & IS$\uparrow$ & FID$\downarrow$ & sFID$\downarrow$ & Pre.$\uparrow$ & Rec.$\uparrow$ \\
\midrule
I-FM 
& 10
& 152.19 
& 11.83 
& \textbf{20.73} 
& 0.64
& \textbf{0.49} \\

\rowcolor{gray!15}
QAT-FM 
& 10
& \textbf{162.15} 
& \textbf{10.67} 
& 20.74 
& \textbf{0.66} 
& 0.48 \\
\midrule

I-FM 
& 50
& 242.65 
& 2.57 
& \textbf{5.62} 
& \textbf{0.80} 
& 0.57 \\
\rowcolor{gray!15}
QAT-FM 
& 50
& \textbf{250.36} 
& \textbf{2.46} 
& 5.69 
& \textbf{0.80}
& \textbf{0.58} \\
\midrule

I-FM 
& 250
& 248.94 
& 2.08 
& 4.29 
& \textbf{0.81} 
& 0.59 \\
\rowcolor{gray!15}
QAT-FM 
& 250
& \textbf{256.96} 
& \textbf{1.99} 
& \textbf{4.25} 
& 0.80
& \textbf{0.60} \\
\bottomrule
\end{tabular}
}
\end{wraptable}
\paragraph{Large-scale latent generation.}
To assess scalability, we further evaluate QAT-FM on class-conditional ImageNet-$256\times256$ latent generation~\citep{rombach2022high,ma2024sit}. Following~\citet{yao2025reconstruction}, images are encoded into VA-VAE latents, and the flow model is trained in the latent space. We construct the Conditional QAT directly over the VAE latents, while following the network architecture, training hyperparameters, and evaluation protocol of~\citet{yao2025reconstruction}. Due to computational constraints, we train for $64$ epochs using the official recommended setting. The implementation and evaluation details are provided in Appendix~\ref{sec:imple}.

Table~\ref{tab:euler_results} shows that QAT-FM consistently improves FID and Inception Score over I-FM across all solver step counts, while maintaining competitive sFID, precision, and recall. The improvements hold in both few-step and high-accuracy sampling regimes, indicating that the QAT coupling benefits latent-space flow training beyond a particular solver choice. Figure~\ref{fig:lightingdit} provides qualitative examples under Euler-250 sampling. These results suggest that QAT-FM can be integrated into modern latent-space flow backbones with minimal architectural changes while improving generation quality.

\section{Conclusion}
We study the prior-data coupling problem in Flow Matching and propose QAT-FM, a simple yet effective coupling strategy based on tree-structured quantile alignment. QAT-FM avoids the high cost of OT-based coupling while preserving favorable path geometry, including non-crossing paths and improved path separation. It also extends naturally to conditional generation through condition-aware tree construction. Extensive experiments verify the effectiveness and scalability of our method across synthetic data, image generation, and large-scale latent-space generation. Overall, QAT-FM provides an efficient and structured coupling mechanism for improving Flow Matching training.

\newpage
\bibliography{Reference}
\bibliographystyle{qatfm_preprint}

\newpage
\appendix

\section{Proofs}
\label{sec:proof}

\subsection{Proof of Theorem~\ref{thm:noncrossing}}
\label{subsec:proof_noncrossing}

\begin{proof}
By the binary tree structure, for any two distinct data leaves $\ell\neq\ell'$, there exists a unique first internal node $\eta$ at which the root-to-$\ell$ and root-to-$\ell'$ paths diverge. Without loss of generality, assume $\ell$ lies in the left subtree of $\eta$ and $\ell'$ lies in the right subtree.

By the construction of the data tree, node $\eta$ is split along coordinate $\kappa_\eta$ with threshold $s_\eta$, so its left and right child regions are
\[
R_{\eta^-}
=
\{x\in R_\eta:x^{(\kappa_\eta)}\le s_\eta\},
\qquad
R_{\eta^+}
=
\{x\in R_\eta:x^{(\kappa_\eta)}>s_\eta\}.
\]
Since $x_1\in R_{\eta^-}$ and $x_1'\in R_{\eta^+}$,
\begin{equation}
\label{eq:data-side-separation}
x_1^{(\kappa_\eta)}
\le
s_\eta
<
(x_1')^{(\kappa_\eta)}.
\end{equation}

On the source side, QAT reuses the same split coordinate $\kappa_\eta$ and replaces the data-side threshold $s_\eta$ by the aligned threshold $\widetilde{s}_\eta$. The corresponding source-side child regions are
\[
\widetilde{R}_{\eta^-}
=
\{z\in\widetilde{R}_\eta:z^{(\kappa_\eta)}\le\widetilde{s}_\eta\},
\qquad
\widetilde{R}_{\eta^+}
=
\{z\in\widetilde{R}_\eta:z^{(\kappa_\eta)}>\widetilde{s}_\eta\}.
\]
Since QAT matches each data leaf with the corresponding source leaf region, the source samples paired with leaves $\ell$ and $\ell'$ satisfy $x_0\in\widetilde{R}_{\eta^-}$ and $x_0'\in\widetilde{R}_{\eta^+}$. Hence
\begin{equation}
\label{eq:source-side-separation}
x_0^{(\kappa_\eta)}
\le
\widetilde{s}_\eta
<
(x_0')^{(\kappa_\eta)}.
\end{equation}

For any $t\in[0,1]$, the two linear interpolation paths are
\[
x_t=(1-t)x_0+tx_1,
\qquad
x_t'=(1-t)x_0'+tx_1'.
\]
Examining the $\kappa_\eta$-th coordinate,
\[
(x_t')^{(\kappa_\eta)}-x_t^{(\kappa_\eta)}
=
(1-t)\bigl\{(x_0')^{(\kappa_\eta)}-x_0^{(\kappa_\eta)}\bigr\}
+
t\bigl\{(x_1')^{(\kappa_\eta)}-x_1^{(\kappa_\eta)}\bigr\}.
\]
By~\eqref{eq:data-side-separation} and~\eqref{eq:source-side-separation}, both differences on the right-hand side are strictly positive. Therefore,
\[
(x_t')^{(\kappa_\eta)}-x_t^{(\kappa_\eta)}>0,
\qquad \forall\,t\in[0,1],
\]
so $x_t'\neq x_t$ for all $t\in[0,1]$. This proves that linear interpolation paths from two distinct QAT leaves are strictly non-coincident almost surely.
\end{proof}

\subsection{Proof of Theorem~\ref{thm:path-separation}}
\label{subsec:proof_path_separation}

\begin{proof}
For any coupling $\pi\in\Pi(\gamma,\widehat{\mu}_{\mathcal{D}})$, let $(X_0,X_1)$ and $(X_0',X_1')$ be two independent pairs drawn from $\pi$. By the linear interpolation $X_t-X_t'=(1-t)(X_0-X_0')+t(X_1-X_1')$,
\begin{equation}
\label{eq:proof_path_expand}
\begin{aligned}
\EE_{\pi^{\otimes 2}}\|X_t-X_t'\|_2^2
=&\,
(1-t)^2\EE\|X_0-X_0'\|_2^2
+
t^2\EE\|X_1-X_1'\|_2^2  \\
&+
2t(1-t)
\EE_{\pi^{\otimes 2}}\!\left[
(X_0-X_0')^\top(X_1-X_1')
\right].
\end{aligned}
\end{equation}
The first two terms depend only on the marginal distributions $\gamma$ and $\widehat{\mu}_{\mathcal{D}}$, and are therefore identical under $\pi_{\mathrm{QAT}}$ and $\pi_{\mathrm{ind}}$. The difference between the two couplings arises entirely from the cross term. Under the independent coupling $\pi_{\mathrm{ind}}=\gamma\otimes\widehat{\mu}_{\mathcal{D}}$, $X_0-X_0'$ and $X_1-X_1'$ are mutually independent with zero mean, giving
\begin{equation}
\label{eq:proof_ind_cross_zero}
\EE_{\pi_{\mathrm{ind}}^{\otimes 2}}\!\left[
(X_0-X_0')^\top(X_1-X_1')
\right]=0.
\end{equation}

\textbf{Recursive analysis of the QAT cross term.}
For each node $\eta$, let $\pi_\eta$ denote the normalized QAT coupling conditioned on node $\eta$'s region, and let $\EE_\eta$ denote expectation when both pairs are drawn independently from $\pi_\eta$. Define
\[
\Gamma_\eta
:=
\EE_\eta\!\left[
(X_0-X_0')^\top(X_1-X_1')
\right].
\]
If $\eta$ is a leaf, the QAT coupling within that leaf is $\gamma_\eta\otimes\widehat{\mu}_\eta$, so $X_0-X_0'$ and $X_1-X_1'$ are independent with zero mean, giving $\Gamma_\eta=0$.

For any internal node $\eta\in\mathcal{I}$ with left and right children $\eta^-$ and $\eta^+$, let
\[
Z_\eta:=\mathbf{1}\{X_0\in\widetilde{R}_{\eta^-},\,X_1\in R_{\eta^-}\},
\qquad
Z_\eta':=\mathbf{1}\{X_0'\in\widetilde{R}_{\eta^-},\,X_1'\in R_{\eta^-}\}
\]
indicate whether the first and second pairs fall into the left child. Write $p_\eta^-:=\widehat{\mu}_\eta(R_{\eta^-})$ and $p_\eta^+:=\widehat{\mu}_\eta(R_{\eta^+})$. By the law of total expectation,
\begin{equation}
\label{eq:proof_node_decomp_simple}
\begin{aligned}
\Gamma_\eta
=&\,
\EE\!\left[
\EE_\eta\!\left[
(X_0-X_0')^\top(X_1-X_1')
\,\middle|\,
Z_\eta,Z_\eta'
\right]
\right]  \\
=&\,
(p_\eta^-)^2\Gamma_{\eta^-}
+
(p_\eta^+)^2\Gamma_{\eta^+}  \\
&+
p_\eta^-p_\eta^+\,
\EE_\eta\!\left[
(X_0-X_0')^\top(X_1-X_1')
\,\middle|\,
Z_\eta=1,\,Z_\eta'=0
\right]  \\
&+
p_\eta^+p_\eta^-\,
\EE_\eta\!\left[
(X_0-X_0')^\top(X_1-X_1')
\,\middle|\,
Z_\eta=0,\,Z_\eta'=1
\right].
\end{aligned}
\end{equation}

When both pairs fall into the same child, they contribute $(p_\eta^-)^2\Gamma_{\eta^-}$ and $(p_\eta^+)^2\Gamma_{\eta^+}$ respectively. For the cross-subtree terms, write $m_{0,\eta}^{\pm}:=\EE_{\gamma_{\eta^\pm}}[X]$ and $m_{1,\eta}^{\pm}:=\EE_{\widehat{\mu}_{\eta^\pm}}[X]$. Taking the case $Z_\eta=0$, $Z_\eta'=1$ (first pair in right child, second in left), the cross-subtree contribution is
\[
\begin{aligned}
&p_\eta^+p_\eta^-\,
\EE_\eta\!\left[
(X_0-X_0')^\top(X_1-X_1')
\,\middle|\,
Z_\eta=0,\,Z_\eta'=1
\right]  \\
=&\,
p_\eta^+p_\eta^-
\bigg\{
\EE_{\eta^+}[X_0^\top X_1]
-
(m_{0,\eta}^+)^\top m_{1,\eta}^-
-
(m_{0,\eta}^-)^\top m_{1,\eta}^+
+
\EE_{\eta^-}[X_0^\top X_1]
\bigg\}.
\end{aligned}
\]
Since $\Gamma_{\eta^\pm}=2\bigl\{\EE_{\eta^\pm}[X_0^\top X_1]-(m_{0,\eta}^\pm)^\top m_{1,\eta}^\pm\bigr\}$, the above simplifies to
\[
p_\eta^+p_\eta^-
\bigg\{
\frac{1}{2}\Gamma_{\eta^+}
+
\frac{1}{2}\Gamma_{\eta^-}
+
\left(m_{0,\eta}^+-m_{0,\eta}^-\right)^\top
\left(m_{1,\eta}^+-m_{1,\eta}^-\right)
\bigg\}.
\]
The case $Z_\eta=1$, $Z_\eta'=0$ contributes the same amount. Substituting into~\eqref{eq:proof_node_decomp_simple},
\begin{equation}
\label{eq:proof_node_recursion_inner}
\Gamma_\eta
=\,
p_\eta^-\Gamma_{\eta^-}
+
p_\eta^+\Gamma_{\eta^+}
+
2p_\eta^-p_\eta^+
\left(m_{0,\eta}^+-m_{0,\eta}^-\right)^\top
\left(m_{1,\eta}^+-m_{1,\eta}^-\right).
\end{equation}

\textbf{Reduction to split-coordinate mean differences.}
Since the source distribution is $\mathcal{N}(0,I_d)$ and all QAT splits are axis-aligned, the source regions $\widetilde{R}_{\eta^-}$ and $\widetilde{R}_{\eta^+}$ differ only in the $\kappa_\eta$-th coordinate interval. By Gaussian coordinate independence, for any $j\neq\kappa_\eta$,
\[
(\mathrm{proj}_j)_\#\gamma_{\eta^+}
=
(\mathrm{proj}_j)_\#\gamma_{\eta^-},
\]
so $(m_{0,\eta}^+)^{(j)}=(m_{0,\eta}^-)^{(j)}$ for all $j\neq\kappa_\eta$. Writing
\[
\begin{aligned}
\Delta_\eta
&:=
\left(m_{0,\eta}^+-m_{0,\eta}^-\right)^\top
\left(m_{1,\eta}^+-m_{1,\eta}^-\right) \\
&=\,
\left[
(m_{0,\eta}^+)^{(\kappa_\eta)}
-
(m_{0,\eta}^-)^{(\kappa_\eta)}
\right]
\left[
(m_{1,\eta}^+)^{(\kappa_\eta)}
-
(m_{1,\eta}^-)^{(\kappa_\eta)}
\right],
\end{aligned}
\]
equation~\eqref{eq:proof_node_recursion_inner} becomes
\begin{equation}
\label{eq:proof_node_recursion}
\Gamma_\eta
=
p_\eta^-\Gamma_{\eta^-}
+
p_\eta^+\Gamma_{\eta^+}
+
2p_\eta^-p_\eta^+\Delta_\eta.
\end{equation}

\textbf{Telescoping from the root.}
Let $\alpha_\eta:=\widehat{\mu}_{\mathcal{D}}(R_\eta)$, $\alpha_{\eta^-}:=\widehat{\mu}_{\mathcal{D}}(R_{\eta^-})$, $\alpha_{\eta^+}:=\widehat{\mu}_{\mathcal{D}}(R_{\eta^+})$. Since $p_\eta^-=\alpha_{\eta^-}/\alpha_\eta$ and $p_\eta^+=\alpha_{\eta^+}/\alpha_\eta$, multiplying~\eqref{eq:proof_node_recursion} by $\alpha_\eta$ gives
\begin{equation}
\label{eq:proof_weighted_recursion}
\alpha_\eta\Gamma_\eta
=
\alpha_{\eta^-}\Gamma_{\eta^-}
+
\alpha_{\eta^+}\Gamma_{\eta^+}
+
2
\frac{\alpha_{\eta^-}\alpha_{\eta^+}}{\alpha_\eta}
\Delta_\eta.
\end{equation}
Telescoping from the root $r$ (where $\alpha_r=1$) and using $\Gamma_\ell=0$ at all leaves,
\begin{equation}
\label{eq:proof_cross_final}
\EE_{\pi_{\mathrm{QAT}}^{\otimes 2}}
\!\left[
(X_0-X_0')^\top(X_1-X_1')
\right]
=
\Gamma_r
=
2\sum_{\eta\in\mathcal{I}}
\frac{\alpha_{\eta^-}\alpha_{\eta^+}}{\alpha_\eta}
\Delta_\eta.
\end{equation}
Setting $q_\eta:=\alpha_{\eta^-}/\alpha_\eta$, so that $\frac{\alpha_{\eta^-}\alpha_{\eta^+}}{\alpha_\eta}=\alpha_\eta q_\eta(1-q_\eta)$,
\begin{equation}
\label{eq:proof_cross_exact}
\EE_{\pi_{\mathrm{QAT}}^{\otimes 2}}
\!\left[
(X_0-X_0')^\top(X_1-X_1')
\right]
=
2\Delta_{\mathrm{QAT}}.
\end{equation}

Combining~\eqref{eq:proof_path_expand},~\eqref{eq:proof_ind_cross_zero}, and~\eqref{eq:proof_cross_exact},
\begin{equation}
\label{eq:proof_final}
\begin{aligned}
&\EE_{\pi_{\mathrm{QAT}}^{\otimes 2}}\!\left[\|X_t-X_t'\|_2^2\right]
-
\EE_{\pi_{\mathrm{ind}}^{\otimes 2}}\!\left[\|X_t-X_t'\|_2^2\right]  \\
=&\,
2t(1-t)
\EE_{\pi_{\mathrm{QAT}}^{\otimes 2}}
\!\left[
(X_0-X_0')^\top(X_1-X_1')
\right]  \\
=&\,
4t(1-t)\Delta_{\mathrm{QAT}}.
\end{aligned}
\end{equation}
Under non-degenerate splits, each effective split node satisfies $\Delta_\eta\ge 0$; if at least one node has $\Delta_\eta>0$, then $\Delta_{\mathrm{QAT}}>0$. This completes the proof.
\end{proof}

\subsection{Properties of CQAT}
\label{subsec:cqat_properties}

Let $\mathcal{D}_c=\{(x_i,c_i)\}_{i=1}^N\subseteq\RR^d\times\RR^m$ be the conditional dataset with joint empirical distribution $\widehat{\mu}_{\mathcal{D}_c}$. Denote the Conditional QAT coupling by $\pi_{\mathrm{CQAT}}\in\Pi(\gamma,\widehat{\mu}_{\mathcal{D}_c})$, where $\gamma=\mathcal{N}(0,I_d)$. Throughout, $X_1$ refers to the data-coordinate component of a joint sample $(X_1,C)$.

\begin{proposition}[Non-crossing property of CQAT]
\label{prop:cqat_noncrossing}
Let $\ell\neq\ell'$ be two leaf nodes of the Conditional QAT tree, and let $\eta$ be the first internal node at which the root-to-$\ell$ and root-to-$\ell'$ paths diverge. If $\eta$ is a data-coordinate split node, i.e., $\kappa_\eta\le d$, then the linear interpolation paths drawn from the two leaf couplings do not intersect.
\end{proposition}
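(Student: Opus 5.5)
The plan is to reuse the argument of Theorem~\ref{thm:noncrossing} verbatim at the divergence node $\eta$. The only new ingredient is a careful description of the source regions in the conditional tree. Condition-coordinate splits leave the source region unchanged, since $\widetilde{R}_{\eta^-}=\widetilde{R}_{\eta^+}=\widetilde{R}_\eta$ by Definition~\ref{def:conditional_qat}. Consequently the source region of any node is still an axis-aligned hyperrectangle in $\RR^d$. It is cut out only by the data-coordinate splits along its root path, each of the form $z^{(\kappa)}\le\widetilde{s}$ or $z^{(\kappa)}>\widetilde{s}$ with $\kappa\le d$.

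\textbf{Locating the pairs on both sides of $\eta$.} Without loss of generality, assume $\ell$ lies in the left subtree of $\eta$ and $\ell'$ in the right subtree.

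On the data side, $(x_1,c)\in R_{\eta^-}$ and $(x_1',c')\in R_{\eta^+}$. Since $\kappa_\eta\le d$, the split constrains the data coordinates, so
\[
x_1^{(\kappa_\eta)}\le s_\eta<(x_1')^{(\kappa_\eta)}.
\]

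On the source side, $\eta$ is a data-coordinate split, so the standard aligned threshold~\eqref{eq:qat_split} partitions $\widetilde{R}_\eta$ into $\widetilde{R}_{\eta^\pm}$. I will then check by induction down the subtree that $\widetilde{R}_\ell\subseteq\widetilde{R}_{\eta^-}$ and $\widetilde{R}_{\ell'}\subseteq\widetilde{R}_{\eta^+}$. Each subsequent node either refines its region (data split) or copies it (condition split), so regions are nested along every root-to-leaf path. Because $x_0\sim\gamma_\ell$ is supported on $\widetilde{R}_\ell$, and likewise for $x_0'$, this gives
\[
x_0^{(\kappa_\eta)}\le\widetilde{s}_\eta<(x_0')^{(\kappa_\eta)}.
\]

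\textbf{Conclusion.} Take the $\kappa_\eta$-th coordinate of $x_t'-x_t=(1-t)(x_0'-x_0)+t(x_1'-x_1)$. It is a convex combination of two strictly positive numbers, hence positive for all $t\in[0,1]$, so $x_t\neq x_t'$. The conditioning variables never enter the interpolation, because the FM path acts only on the data space.

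\textbf{Main obstacle.} The step needing the most care is the nesting claim. It must hold even though condition splits duplicate the source region, which means that distinct leaves may share identical source regions. The key point to verify is that any such duplication happens strictly below or away from $\eta$. In particular, it cannot undo the data-coordinate separation created at $\eta$, since every leaf in the left subtree inherits the constraint $z^{(\kappa_\eta)}\le\widetilde{s}_\eta$ and every leaf in the right subtree inherits $z^{(\kappa_\eta)}>\widetilde{s}_\eta$.

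A secondary point is the boundary convention: $x_0^{(\kappa_\eta)}=\widetilde{s}_\eta$ has $\gamma$-measure zero. So, as in Theorem~\ref{thm:noncrossing}, the statement holds almost surely, and no strictness issue arises on the source side.
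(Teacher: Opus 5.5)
Your proposal is correct and matches the paper's proof step for step. Like the paper, you place the two pairs on opposite sides of the first divergence node $\eta$ on both the data side ($x_1^{(\kappa_\eta)}\le s_\eta<(x_1')^{(\kappa_\eta)}$) and the source side ($x_0^{(\kappa_\eta)}\le\widetilde{s}_\eta<(x_0')^{(\kappa_\eta)}$), and then note that the $\kappa_\eta$-th coordinate of $x_t'-x_t$ is a convex combination of two strictly positive numbers. Your nesting argument ($\widetilde{R}_\ell\subseteq\widetilde{R}_{\eta^-}$ and $\widetilde{R}_{\ell'}\subseteq\widetilde{R}_{\eta^+}$, since condition splits copy regions and data splits refine them) spells out a step the paper simply asserts; your almost-sure caveat is unnecessary, since $\widetilde{R}_{\eta^+}$ is defined by a strict inequality.
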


\begin{proof}
Without loss of generality, assume $\ell$ lies in the left subtree $\eta^-$ and $\ell'$ in the right subtree $\eta^+$. Since $\kappa_\eta\le d$, the CQAT tree performs a data-coordinate split at $\eta$. By the tree construction, for any $(x_1,c_1)\in R_\ell$ and $(x_1',c_1')\in R_{\ell'}$,
\[
x_1^{(\kappa_\eta)}\le s_\eta < (x_1')^{(\kappa_\eta)}.
\]
Moreover, since $\eta$ is a data-coordinate split node, the source-side region is simultaneously partitioned: for any $x_0\in\widetilde{R}_\ell$ and $x_0'\in\widetilde{R}_{\ell'}$,
\[
x_0^{(\kappa_\eta)}\le\widetilde{s}_\eta < (x_0')^{(\kappa_\eta)}.
\]
Hence, for all $t\in[0,1]$,
\[
(x_t')^{(\kappa_\eta)}-x_t^{(\kappa_\eta)}
=
(1-t)\bigl[(x_0')^{(\kappa_\eta)}-x_0^{(\kappa_\eta)}\bigr]
+
t\bigl[(x_1')^{(\kappa_\eta)}-x_1^{(\kappa_\eta)}\bigr]
>0,
\]
which implies $x_t'\neq x_t$ for all $t\in[0,1]$.
\end{proof}

\begin{proposition}[Path separation property of CQAT]
\label{prop:cqat_path_separation}
Let $x_t=(1-t)x_0+tx_1$ and $x_t'=(1-t)x_0'+tx_1'$, where $x_1,x_1'$ denote the data-coordinate components of joint samples. For any $t\in(0,1)$,
\[
\EE_{\pi_{\mathrm{CQAT}}^{\otimes 2}}\!\left[\|x_t-x_t'\|_2^2\right]
-
\EE_{\pi_{\mathrm{ind}}^{\otimes 2}}\!\left[\|x_t-x_t'\|_2^2\right]
=
4t(1-t)\Delta_{\mathrm{CQAT}},
\]
where $\Delta_{\mathrm{CQAT}}$ is the node-level separation induced by the Conditional QAT tree structure:
\begin{equation}
\label{eq:cqat_path_separation}
\begin{aligned}
\Delta_{\mathrm{CQAT}}
&:=
\sum_{\eta\in\mathcal{I}_c:\,\kappa_\eta\le d}
\alpha_\eta q_\eta(1-q_\eta)
\Bigl(\EE_{\gamma_{\eta^+}}\!\bigl[X^{(\kappa_\eta)}\bigr]-\EE_{\gamma_{\eta^-}}\!\bigl[X^{(\kappa_\eta)}\bigr]\Bigr)
\\[-2pt]
&\qquad\times
\Bigl(\EE_{\widehat{\mu}_{\mathcal{D}_c,\eta^+}}\!\bigl[X^{(\kappa_\eta)}\bigr]-\EE_{\widehat{\mu}_{\mathcal{D}_c,\eta^-}}\!\bigl[X^{(\kappa_\eta)}\bigr]\Bigr).
\end{aligned}
\end{equation}
Here $\mathcal{I}_c$ is the set of internal nodes of the Conditional QAT tree; $\alpha_\eta:=\widehat{\mu}_{\mathcal{D}_c}(R_\eta)$ is the joint data mass at node $\eta$; and $q_\eta:=\widehat{\mu}_{\mathcal{D}_c,\eta}(R_{\eta^-})$ is the conditional mass of the left child.
\end{proposition}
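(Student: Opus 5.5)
The plan is to rerun the proof of Theorem~\ref{thm:path-separation} on the Conditional QAT tree, with one change at condition-coordinate splits. First I expand $\|x_t-x_t'\|_2^2$ as in~\eqref{eq:proof_path_expand}. Only the data components $x_1,x_1'$ enter the path, so the two squared terms depend only on the marginals $\gamma$ and the $x$-marginal of $\widehat{\mu}_{\mathcal{D}_c}$. These marginals are the same under $\pi_{\mathrm{CQAT}}$ and $\pi_{\mathrm{ind}}$, so the difference is $2t(1-t)$ times the cross term $\EE[(X_0-X_0')^\top(X_1-X_1')]$. Under $\pi_{\mathrm{ind}}$ this cross term vanishes. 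It therefore remains to show that the CQAT cross term equals $2\Delta_{\mathrm{CQAT}}$.

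For each node $\eta$ of the joint tree I define $\Gamma_\eta$ as before, using the normalized coupling $\pi_\eta$ on the node. At a leaf the coupling is $\gamma_\ell\otimes\widehat{\mu}_{\mathcal{D}_c,\ell}$, so $\Gamma_\ell=0$. At an internal node I condition on which child each pair falls in. The same-child terms give $(p_\eta^\pm)^2\Gamma_{\eta^\pm}$, and the cross-child terms are computed exactly as in the earlier proof. Together they give the general recursion~\eqref{eq:proof_node_recursion_inner}:
\[
\Gamma_\eta=p_\eta^-\Gamma_{\eta^-}+p_\eta^+\Gamma_{\eta^+}+2p_\eta^-p_\eta^+\,(m_{0,\eta}^+-m_{0,\eta}^-)^\top(m_{1,\eta}^+-m_{1,\eta}^-).
\]
Here $m_{1,\eta}^\pm$ are the means of the data component under $\widehat{\mu}_{\mathcal{D}_c,\eta^\pm}$. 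This derivation never uses how the source is split, only that within each child the coupling restricted to that child is the normalized CQAT coupling, so it carries over unchanged.

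The main step is to evaluate the source-mean difference in the two cases of Definition~\ref{def:conditional_qat}.
\begin{itemize}
\item \textbf{Data-coordinate split ($\kappa_\eta\le d$).} The Gaussian product structure on the axis-aligned box $\widetilde{R}_\eta$ makes $m_{0,\eta}^\pm$ agree in every coordinate except $\kappa_\eta$. This reduces the inner product to the product of the $\kappa_\eta$-coordinate differences, exactly as in the unconditional proof.
\item \textbf{Condition-coordinate split ($\kappa_\eta>d$).} Both children inherit $\widetilde{R}_\eta$, so $\gamma_{\eta^+}=\gamma_{\eta^-}=\gamma_\eta$ and $m_{0,\eta}^+=m_{0,\eta}^-$. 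The gain term is therefore zero, even though the data-side means may differ.
\end{itemize}
This is exactly why the sum in~\eqref{eq:cqat_path_separation} runs only over nodes with $\kappa_\eta\le d$.

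Finally, I multiply the recursion by $\alpha_\eta$ and use $p_\eta^\pm=\alpha_{\eta^\pm}/\alpha_\eta$. Telescoping from the root, where $\alpha_r=1$, and using $\Gamma_\ell=0$ at the leaves gives $\Gamma_r=2\sum_{\eta\in\mathcal{I}_c,\kappa_\eta\le d}\alpha_\eta q_\eta(1-q_\eta)\Delta_\eta=2\Delta_{\mathrm{CQAT}}$. Combining this with the expansion yields the $4t(1-t)\Delta_{\mathrm{CQAT}}$ identity.

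The only delicate point is checking that the marginal source restricted to each child is still $\gamma_{\eta^\pm}$ when the source region is not split. This holds because the mass fraction $p_\eta^\pm$ multiplies the unchanged $\gamma_\eta$, so within each child the source marginal is $\gamma_\eta$ and the recursion remains valid.
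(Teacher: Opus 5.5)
Your proposal is correct and takes the same route as the paper. You reuse the expansion and the node-level recursion from the proof of Theorem~\ref{thm:path-separation} on the joint tree, and then note that condition-coordinate splits leave the source region unchanged, so $\gamma_{\eta^-}=\gamma_{\eta^+}$ and those nodes contribute nothing. Your explicit handling of the data-coordinate case and of the mixture decomposition at condition splits only spells out steps the paper leaves implicit.
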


\begin{proof}
The telescoping recursion~\eqref{eq:proof_cross_final} in the proof of Theorem~\ref{thm:path-separation} relies only on the left-right child decomposition at each internal node and the product structure of the leaf couplings. Replacing the QAT tree with the Conditional QAT tree and $\widehat{\mu}_{\mathcal{D}}$ with the joint empirical distribution $\widehat{\mu}_{\mathcal{D}_c}$, the same argument gives
\[
\EE_{\pi_{\mathrm{CQAT}}^{\otimes 2}}
\!\left[
(X_0-X_0')^\top(X_1-X_1')
\right]
=
2\sum_{\eta\in\mathcal{I}_c}
\frac{\alpha_{\eta^-}\alpha_{\eta^+}}{\alpha_\eta}
\Delta_\eta,
\]
where $\alpha_\eta:=\widehat{\mu}_{\mathcal{D}_c}(R_\eta)$ and $\Delta_\eta$ is the local cross-term contribution at node $\eta$.

It remains to show that condition-coordinate split nodes contribute zero. If $\kappa_\eta>d$, the CQAT tree does not partition the source-side region at $\eta$, so both child nodes inherit the same source region: $\gamma_{\eta^-}=\gamma_{\eta^+}$. Consequently,
\[
\EE_{\gamma_{\eta^+}}[X]=\EE_{\gamma_{\eta^-}}[X],
\]
and the source-side mean difference vanishes, giving $\Delta_\eta=0$. Therefore, only data-coordinate split nodes ($\kappa_\eta\le d$) contribute, and the sum reduces to
\[
\EE_{\pi_{\mathrm{CQAT}}^{\otimes 2}}
\!\left[
(X_0-X_0')^\top(X_1-X_1')
\right]
=
2\Delta_{\mathrm{CQAT}}.
\]
Combining with the linear interpolation expansion~\eqref{eq:proof_path_expand} and the fact that the cross-term vanishes under the independent coupling $\pi_{\mathrm{ind}}$, we obtain
\[
\EE_{\pi_{\mathrm{CQAT}}^{\otimes 2}}\!\left[\|X_t-X_t'\|_2^2\right]
-
\EE_{\pi_{\mathrm{ind}}^{\otimes 2}}\!\left[\|X_t-X_t'\|_2^2\right]
=
4t(1-t)\Delta_{\mathrm{CQAT}}.
\]
Under non-degenerate splits, if at least one data-coordinate split node contributes positively, then $\Delta_{\mathrm{CQAT}}>0$. This completes the proof.
\end{proof}

\section{Algorithms}
\label{sec:alg}

\subsection{QAT Coupling Construction and Sampling}
\label{subsec:qatalg}

This section presents pseudocode for QAT coupling construction and training-pair sampling. QAT uses $\gamma=\mathcal{N}(0,I_d)$ as the source distribution and maintains source-side coordinate bounds $(a_\ell,b_\ell)\in(\mathbb{R}\cup\{\pm\infty\})^d\times(\mathbb{R}\cup\{\pm\infty\})^d$ at each leaf node. Throughout, $\Phi$ and $\Phi^{-1}$ denote the standard-normal CDF and its inverse, respectively.

\begin{algorithm}[ht]
\caption{QAT Coupling Construction}
\label{alg:qat_construction}
\begin{algorithmic}[1]
\Require Dataset $\mathcal{D}=\{x_i\}_{i=1}^N\subseteq\mathbb{R}^d$
\State $\mathcal{S}_r\gets\{1,\ldots,N\}$, $a_r\gets(-\infty,\ldots,-\infty)$, $b_r\gets(+\infty,\ldots,+\infty)$
\State $\mathcal{Q}\gets\{r\}$, $\mathcal{L}\gets\varnothing$
\While{$\mathcal{Q}\neq\varnothing$}
    \State Pop one node $\eta$ from $\mathcal{Q}$
    \If{$|\mathcal{S}_\eta|\le 1$}
        \State Add $(\mathcal{S}_\eta,\,a_\eta,\,b_\eta)$ to $\mathcal{L}$
    \Else
        \State $\kappa_\eta\gets\arg\max_{j\in[d]}\operatorname{Var}_{\{x_i:i\in\mathcal{S}_\eta\}}\!(x^{(j)})$,\quad $s_\eta\gets\frac{1}{|\mathcal{S}_\eta|}\sum_{i\in\mathcal{S}_\eta}x_i^{(\kappa_\eta)}$
        \State $\mathcal{S}_{\eta^-}\gets\{i\in\mathcal{S}_\eta:x_i^{(\kappa_\eta)}\le s_\eta\}$,\quad $\mathcal{S}_{\eta^+}\gets\mathcal{S}_\eta\setminus\mathcal{S}_{\eta^-}$,\quad $q_\eta\gets|\mathcal{S}_{\eta^-}|/|\mathcal{S}_\eta|$
        \State $\widetilde{s}_\eta\gets\Phi^{-1}\!\bigl(\Phi(a_\eta^{(\kappa_\eta)})+q_\eta\cdot\bigl(\Phi(b_\eta^{(\kappa_\eta)})-\Phi(a_\eta^{(\kappa_\eta)})\bigr)\bigr)$
        \State $(a_{\eta^-},b_{\eta^-})\gets(a_\eta,b_\eta)$;\quad $(a_{\eta^+},b_{\eta^+})\gets(a_\eta,b_\eta)$
        \State $b_{\eta^-}^{(\kappa_\eta)}\gets\widetilde{s}_\eta$;\quad $a_{\eta^+}^{(\kappa_\eta)}\gets\widetilde{s}_\eta$
        \State Push $\eta^-$ and $\eta^+$ into $\mathcal{Q}$
    \EndIf
\EndWhile
\State \Return $\mathcal{L}=\{(\mathcal{S}_\ell,\,a_\ell,\,b_\ell):\ell\in\mathcal{L}\}$
\end{algorithmic}
\end{algorithm}

Because the construction terminates when $|\mathcal{S}_\ell|\le 1$, each non-empty leaf holds exactly one training sample. Every leaf therefore carries mass $1/N$, and sampling a leaf is equivalent to drawing a data index uniformly at random.

\begin{algorithm}[ht]
\caption{QAT Coupling Training-Pair Sampling}
\label{alg:qat_sampling}
\begin{algorithmic}[1]
\Require Leaf list $\mathcal{L}=\{(\mathcal{S}_\ell,\,a_\ell,\,b_\ell):\ell\in\mathcal{L}\}$
\State Sample $\ell\sim\operatorname{Unif}(\mathcal{L})$;\quad let $\mathcal{S}_\ell=\{i\}$ and set $x\gets x_i$
\For{$j=1,\ldots,d$}
    \State Sample $u_j\sim\operatorname{Unif}\bigl(\Phi(a_\ell^{(j)}),\,\Phi(b_\ell^{(j)})\bigr)$;\quad set $\epsilon^{(j)}\gets\Phi^{-1}(u_j)$
\EndFor
\State \Return $(\epsilon,\,x)$
\end{algorithmic}
\end{algorithm}

In flow matching training, the sampled pair $(\epsilon,x)$ serves as $(x_0,x_1)$.

\paragraph{Complexity analysis.}
Assuming the tree is approximately balanced, it has $\mathcal{O}(\log N)$ levels. At each level, computing coordinate means and variances over all $N$ samples costs $\mathcal{O}(Nd)$, giving a total construction cost of $\mathcal{O}(Nd\log N)$. Sampling a single training pair requires generating $d$ independent truncated-Gaussian variates via the inverse-CDF method, each in $O(1)$, for a per-sample cost of $\mathcal{O}(d)$. Epoch-level sampling therefore costs $\mathcal{O}(Nd)$, matching standard Gaussian sampling in asymptotic complexity.

\paragraph{Vectorized implementation.}
In practice, the source-side bounds for each sample's corresponding leaf are precomputed and stacked into two matrices $\A,\B\in(\mathbb{R}\cup\{\pm\infty\})^{N\times d}$ indexed by data order. Given a batch of indices $\mathcal{I}=\{i_b\}_{b=1}^B$, one retrieves $\A_{\mathcal{I},:}$ and $\B_{\mathcal{I},:}$ directly and draws $U_{bj}\sim\operatorname{Unif}\bigl(\Phi(\A_{i_b,j}),\Phi(\B_{i_b,j})\bigr)$ in parallel, then sets $\epsilon_{bj}=\Phi^{-1}(U_{bj})$. The entire batch sampling step is therefore fully vectorizable.

\subsection{CQAT Coupling Construction and Sampling}
\label{subsec:cqat_alg}

In CQAT, the condition variable is embedded as a Euclidean vector and concatenated with the data feature to form $y_i=(x_i,c_i)\in\mathbb{R}^{d+m}$; the algorithm then builds a tree directly on the joint sample set $\mathcal{D}_c=\{y_i\}_{i=1}^N$. To prevent late-stage splits driven purely by condition coordinates — which partition the data index set without updating the source-side bounds — we restrict the candidate split coordinates to $[d]$ whenever the node sample count falls below a threshold $n_{\mathrm{data}}$. Algorithm~\ref{alg:cqat_construction} gives the full construction.

\begin{algorithm}[ht]
\caption{CQAT Coupling Construction}
\label{alg:cqat_construction}
\begin{algorithmic}[1]
\Require Joint dataset $\mathcal{D}_c=\{y_i\}_{i=1}^N\subseteq\mathbb{R}^{d+m}$, data-split threshold $n_{\mathrm{data}}$
\State $\mathcal{S}_r\gets\{1,\ldots,N\}$, $a_r\gets(-\infty,\ldots,-\infty)$, $b_r\gets(+\infty,\ldots,+\infty)$
\State $\mathcal{Q}\gets\{r\}$, $\mathcal{L}\gets\varnothing$
\While{$\mathcal{Q}\neq\varnothing$}
    \State Pop one node $\eta$ from $\mathcal{Q}$
    \If{$|\mathcal{S}_\eta|\le 1$}
        \State Add $(\mathcal{S}_\eta,\,a_\eta,\,b_\eta)$ to $\mathcal{L}$
    \Else
        \State $\mathcal{J}_\eta\gets[d]$ if $|\mathcal{S}_\eta|\le n_{\mathrm{data}}$, else $\mathcal{J}_\eta\gets[d+m]$
        \State $\kappa_\eta\gets\arg\max_{j\in\mathcal{J}_\eta}\operatorname{Var}_{\{y_i:i\in\mathcal{S}_\eta\}}\!(y^{(j)})$,\quad $s_\eta\gets\frac{1}{|\mathcal{S}_\eta|}\sum_{i\in\mathcal{S}_\eta}y_i^{(\kappa_\eta)}$
        \State $\mathcal{S}_{\eta^-}\gets\{i\in\mathcal{S}_\eta:y_i^{(\kappa_\eta)}\le s_\eta\}$,\quad $\mathcal{S}_{\eta^+}\gets\mathcal{S}_\eta\setminus\mathcal{S}_{\eta^-}$
        \State $(a_{\eta^-},b_{\eta^-})\gets(a_\eta,b_\eta)$;\quad $(a_{\eta^+},b_{\eta^+})\gets(a_\eta,b_\eta)$
        \If{$\kappa_\eta\le d$}
            \State $q_\eta\gets|\mathcal{S}_{\eta^-}|/|\mathcal{S}_\eta|$
            \State $\widetilde{s}_\eta\gets\Phi^{-1}\!\bigl(\Phi(a_\eta^{(\kappa_\eta)})+q_\eta\cdot\bigl(\Phi(b_\eta^{(\kappa_\eta)})-\Phi(a_\eta^{(\kappa_\eta)})\bigr)\bigr)$
            \State $b_{\eta^-}^{(\kappa_\eta)}\gets\widetilde{s}_\eta$;\quad $a_{\eta^+}^{(\kappa_\eta)}\gets\widetilde{s}_\eta$
        \EndIf
        \State Push $\eta^-$ and $\eta^+$ into $\mathcal{Q}$
    \EndIf
\EndWhile
\State \Return $\mathcal{L}=\{(\mathcal{S}_\ell,\,a_\ell,\,b_\ell):\ell\in\mathcal{L}\}$
\end{algorithmic}
\end{algorithm}

\paragraph{Scale balancing.}
Since CQAT builds on data and condition coordinates jointly, scale imbalance between $x_i$ and $c_i$ may bias split selection toward high-variance condition dimensions. We therefore build the tree on $(X,\,wC)$. For discrete class labels, we set $w=\infty$, which is formally equivalent to first partitioning all samples by class and then constructing an independent QAT within each class subtree — matching the approach of C2OT~\citep{cheng2025curse}. For continuous conditions, we adopt the variance-normalized weight
\begin{equation}
\label{eq:conditional_weight}
    w=\left(\frac{\sum_j\operatorname{Var}(X^{(j)})}{\sum_k\operatorname{Var}(C^{(k)})}\right)^{1/2},
\end{equation}
so that the rescaled condition variable has aggregate variance comparable to that of the data features.

\paragraph{Complexity analysis.}
Building the tree over $d+m$ coordinates gives a construction cost of $\mathcal{O}(N(d+m)\log N)$ under a balanced-tree assumption. Condition-coordinate splits ($\kappa_\eta>d$) partition the data index set without touching the source-side bounds; data-coordinate splits ($\kappa_\eta\le d$) perform quantile alignment exactly as in QAT. At training time, Algorithm~\ref{alg:qat_sampling} applies directly: one returns the data sample $x_i$ together with its paired condition variable $c_i$. Since source sampling remains confined to $d$-dimensional Gaussian space, the per-sample cost is $\mathcal{O}(d)$, unchanged from QAT.

\subsection{High-Dimensional Image Data Processing}
\label{subsec:hd_alg}

For high-dimensional image data, we do not build the QAT directly in raw pixel space. Instead, we first apply an orthogonal transform to concentrate the principal variation into the leading coordinates. Crucially, only a pure rotation is applied — no mean subtraction — so that if the source is $\mathcal{N}(0,I_d)$, it remains standard Gaussian under both the forward and inverse transforms.

\paragraph{Patch-Hadamard transform.}
Let the input image have spatial resolution $H\times W$ with $C$ channels, flattened to $x_i\in\mathbb{R}^d$ ($d=HWC$). For patch size $p$, the image is divided into $d'=(H/p)(W/p)C$ non-overlapping patches, each of spatial dimension $r=p^2$. A normalized Hadamard matrix $H_{\mathrm{patch}}\in\mathbb{R}^{r\times r}$~\citep{horadam2012hadamard} is applied independently to each patch. Because the normalized Hadamard matrix is orthogonal with constant direction $r^{-1/2}\mathbf{1}_r$, extracting the coefficient along this direction is equivalent to patch-level mean pooling. We denote the resulting mean-pooled representation as $\bar{x}_i\in\mathbb{R}^{d'}$.

Let $H_p\in\mathbb{R}^{d\times d}$ denote the global patch-wise Hadamard transform induced by patch size $p$. Since each per-patch transform is orthogonal, $H_p$ is itself orthogonal. We order the output coordinates so that the first $d'$ entries of $H_p x_i$ correspond to the constant-direction coefficients across all patches. In practice, if only the low-dimensional tree representation is needed, $\bar{x}_i$ can be obtained directly by patch-level mean pooling, with PCA directions estimated on this $d'$-dimensional representation.

\paragraph{PCA rotation and low-dimensional QAT.}
Patch-Hadamard pooling avoids the cost of PCA in the full $d$-dimensional pixel space. We estimate the PCA rotation matrix $\mathbf{P}\in\mathbb{R}^{d'\times d'}$ from $\{\bar{x}_i\}_{i=1}^N$ without mean shift. Setting $z_i=\mathbf{P}\bar{x}_i\in\mathbb{R}^{d'}$, we use only the leading $k$ coordinates $z_i^{(1:k)}$ to construct the QAT coupling. Algorithm~\ref{alg:hd_qat_construction} summarizes the full construction pipeline.

\begin{algorithm}[ht]
\caption{High-Dimensional QAT Construction}
\label{alg:hd_qat_construction}
\begin{algorithmic}[1]
\Require Image dataset $\mathcal{D}=\{x_i\}_{i=1}^N$, patch size $p$, reduced dimension $k$
\State $d'\gets(H/p)(W/p)C$
\State $\bar{x}_i\gets(H_p x_i)^{(1:d')}$, \quad for $i=1,\ldots,N$
\State Estimate PCA rotation $\mathbf{P}\in\mathbb{R}^{d'\times d'}$ from $\{\bar{x}_i\}_{i=1}^N$ without mean shift
\State $\widetilde{z}_i\gets(\mathbf{P}\bar{x}_i)^{(1:k)}$, \quad for $i=1,\ldots,N$
\State Run Algorithm~\ref{alg:qat_construction} on $\{\widetilde{z}_i\}_{i=1}^N$
\State \Return $\mathcal{L}$, $\mathbf{P}$, $H_p$
\end{algorithmic}
\end{algorithm}

\paragraph{Inverse-transform sampling.}
The low-dimensional QAT constructs a coupling over the leading $k$ PCA-pooled coordinates, whose source marginal is $\mathcal{N}(0,I_k)$. To recover a source sample in the original $d$-dimensional pixel space, we fill the remaining coordinates with independent standard Gaussian noise: first the $d'-k$ residual PCA-pooled coordinates, then the high-frequency Hadamard coefficients for each patch (all per-patch dimensions except the constant direction). Since $\mathbf{P}$ and $H_p$ are both orthogonal and all unfilled coordinates are drawn as independent standard Gaussians, the resulting source sample has marginal exactly $\mathcal{N}(0,I_d)$.

\begin{algorithm}[ht]
\caption{High-Dimensional QAT Sampling}
\label{alg:hd_qat_sampling}
\begin{algorithmic}[1]
\Require $\mathcal{D}=\{x_i\}_{i=1}^N$, leaf list $\mathcal{L}=\{(\mathcal{S}_\ell,\,a_\ell,\,b_\ell):\ell\in\mathcal{L}\}$, $\mathbf{P}$, $H_p$, $d'$, $k$
\State Sample one leaf $\ell\sim\operatorname{Unif}(\mathcal{L})$;\quad let $\mathcal{S}_\ell=\{i\}$ and set $x\gets x_i$
\For{$j=1,\ldots,k$}
    \State Sample $u_j\sim\operatorname{Unif}\bigl(\Phi(a_\ell^{(j)}),\,\Phi(b_\ell^{(j)})\bigr)$;\quad set $\epsilon^{(j)}\gets\Phi^{-1}(u_j)$
\EndFor
\State Sample $\epsilon^{(k+1:d')}\sim\mathcal{N}(0,I_{d'-k})$
\State $\epsilon_{\mathrm{pool}}\gets\bigl(\epsilon^{(1:k)},\,\epsilon^{(k+1:d')}\bigr)$
\State $\bar{\epsilon}\gets\mathbf{P}^\top\epsilon_{\mathrm{pool}}$
\State Sample $\epsilon_{\mathrm{high}}\sim\mathcal{N}(0,I_{d-d'})$
\State $\epsilon_H\gets\bigl(\bar{\epsilon},\,\epsilon_{\mathrm{high}}\bigr)$
\State $x_0\gets H_p^\top\epsilon_H$
\State \Return $(x_0,\,x)$
\end{algorithmic}
\end{algorithm}

\paragraph{Complexity analysis.}
Let $d=HWC$, $r=p^2$, and $d'=(H/p)(W/p)C$. Computing patch-level mean pooling costs $\mathcal{O}(Nd)$; the full patch-wise Hadamard transform can be applied via the Fast Walsh-Hadamard Transform~\citep{horadam2012hadamard} in $\mathcal{O}(Nd\log r)$. PCA is estimated on the $d'$-dimensional pooled representation, and the low-dimensional QAT is built in $k$-dimensional space, giving tree construction cost $\mathcal{O}(Nk\log N)$. At sampling time, the inverse PCA rotation costs $\mathcal{O}(d'^2)$ per sample and the inverse patch-wise Hadamard transform costs $\mathcal{O}(d\log r)$, giving a total per-sample cost of $\mathcal{O}(d+d'^2+d\log r)$; batch sampling is fully vectorizable.

\section{Implementation Details}
\label{sec:imple}

This section details the per-experiment training configurations. Unless stated otherwise, all methods share the same network architecture, training configuration, and sampling protocol, differing only in the coupling strategy used to form training pairs: I-FM uses independent coupling, OT-FM uses mini-batch OT coupling, and QAT-FM uses the QAT or Conditional QAT coupling proposed in this work.

\subsection{Two-Dimensional Experiment}
\label{subsec:impl_2d}

\paragraph{Data.}
Following~\citet{lipman2022flow,pooladian2023multisample}, we consider transport from a two-dimensional standard Gaussian source distribution to a checkerboard target distribution. Training samples are drawn independently from the source and target; the flow paths learned by I-FM, OT-FM, SD-FM, and QAT-FM are then compared under the same network architecture and optimization configuration.

\paragraph{Network and training.}
Following~\citet{lipman2024flow}, the velocity field is parameterized by a five-layer MLP with hidden width $512$ and Swish activations~\citep{ramachandran2017searching}. All methods are trained for $20$K iterations with batch size $256$ and Adam~\citep{kingma2015adam} at learning rate $10^{-3}$.
For QAT-FM, $50$K samples are drawn from the checkerboard target to build the QAT tree, and training pairs are sampled from the induced coupling. For OT-FM, exact EMD is solved within each mini-batch using POT~\citep{flamary2021pot}. For SD-FM, a discrete reference measure is built from $50$K target samples and the corresponding regularized problem is solved with entropy parameter $0.05$; we optimize with Adam, as the SGD-based strategy of~\citet{genevay2016stochastic,mousavi2026flow} proved unstable and slow to converge in this setting.

\begin{table}[t]
\centering
\caption{\footnotesize Hyperparameter settings for image-space experiments.}
\label{tab:impl_image_hparams}
\small
\begin{tabular}{lccc}
\toprule
 & CIFAR-10 & ImageNet-$32$ & CelebA-$64$ \\
\midrule
Channels & 192 & 256 & 192 \\
Depth & 2 & 3 & 3 \\
Channel multiplier & 1,2,2,2 & 1,2,2,2 & 1,2,3,4 \\
Heads & 4 & 4 & 4 \\
Head channels & 64 & 64 & 64 \\
Attention resolution & 16 & 4 & 8 \\
Dropout & 0.0 & 0.0 & 0.0 \\
Batch size / GPU & 256 & 128 & 64 \\
GPUs & 1 & 4 & 4 \\
Effective batch size & 256 & 512 & 256 \\
Iterations & $100$k & $200$k & $100$k \\
Learning rate & 2e-4 & 1e-4 & 1e-4 \\
LR scheduler & Constant & Linear decay & Constant \\
Warmup steps & 5k & 20k & 10k \\
\bottomrule
\end{tabular}
\end{table}

\subsection{CIFAR-10}

We conduct unconditional image generation on CIFAR-10~\citep{krizhevsky2009learning}, using the U-Net architecture of~\citet{tong2024improving}. Main network and training hyperparameters are listed in Table~\ref{tab:impl_image_hparams}. We train with Adam, a linear warmup to $2\times10^{-4}$ over $5$k steps (constant thereafter), gradient norm clipping at $1.0$, and an EMA decay of $0.9999$.

For QAT-FM, the QAT coupling is built from the $50{,}000$ CIFAR-10 training images together with their horizontally flipped augmentations, giving $n=100{,}000$ tree samples. Images are first processed by patch-level mean pooling with patch size $p=4$; PCA rotation is then estimated on the pooled representation, and the leading $k=32$ coordinates are retained for tree construction. The tree splits on the maximum-variance coordinate at the empirical mean threshold and recurses until each leaf contains at most one sample. At evaluation, $50$K images are generated, and FID is computed using \texttt{clean-fid}~\citep{parmar2022aliased}, following the protocol of~\citet{tong2024improving}.

\subsection{ImageNet-32}

We evaluate class-conditional generation on ImageNet-1k~\citep{deng2009imagenet} at $32\times32$ resolution. All images are resized to $32\times32$ with random horizontal flipping, and class labels are provided as condition inputs to the velocity-field network. We follow the class-conditional U-Net architecture of~\citet{tong2024improving}; main hyperparameters are listed in Table~\ref{tab:impl_image_hparams}. Optimizer, gradient clipping, and EMA settings match the CIFAR-10 setup. The learning rate warms up linearly to $1\times10^{-4}$ over $20$k steps and then decays linearly to $3\times10^{-5}$.

For QAT-FM, the Conditional QAT coupling is built from ImageNet-1k training images and their horizontal-flip augmentations, yielding over $2$M tree samples. Patch-level mean pooling ($p=4$) and PCA rotation are applied as in the CIFAR-10 setup, retaining the leading $k=32$ coordinates. Since the condition is a discrete class label, training samples are first partitioned by class and an independent QAT coupling is constructed within each class subset.
For C2OT~\citep{cheng2025curse}, we follow its oversampling strategy with ratio $50$, yielding a local OT batch size of $128\times50=6400$ per GPU. Following the C2OT setting for class-conditional generation, we use its conditional weighting scheme and set the condition weight to $10^8$.

At evaluation, $50$K image samples are generated to estimate Inception feature statistics of the generated distribution; reference statistics are computed from all ${\sim}1.28$M ImageNet-1k training images, following~\citet{dhariwal2021diffusion,ma2024sit}.

\subsection{CelebA-64}

\paragraph{Data.}
We evaluate text-conditional generation on CelebA~\citep{liu2015deep} at $64\times64$ resolution. Since CelebA does not provide text annotations by default, we use the natural-language descriptions from CelebA-Dialog~\citep{jiang2021talk} as text conditions.

\paragraph{Network and Training.}
We use the U-Net architecture with main hyperparameters in Table~\ref{tab:impl_image_hparams}. Following~\citet{cheng2025curse}, scale-shift normalization is applied in the U-Net residual blocks, and text embeddings are projected to the conditioning dimension via a linear layer before being fed to the network. Optimizer, gradient clipping, and EMA settings match the CIFAR-10 experiment; the learning rate warms up linearly to $1\times10^{-4}$ over $10$k steps and remains constant thereafter.

For QAT-FM, a Conditional QAT coupling is constructed in the joint space of low-dimensional image representations and text embeddings. The image-side representation is obtained via patch-level mean pooling ($p=4$) and PCA rotation, retaining the leading $k=32$ coordinates. Text conditions are encoded by a \texttt{DFN5B-CLIP-ViT-H-14} text encoder~\citep{fang2024data}; the resulting embeddings are projected to $32$ dimensions via PCA and used as condition-side features in the joint tree. To prevent late-stage splits driven purely by condition coordinates, candidate split dimensions are restricted to data coordinates when a node's sample count falls below $n_{\mathrm{data}}=128$. 
For C2OT~\citep{cheng2025curse}, we follow its oversampling strategy with ratio $50$, a local OT batch size of $64\times50=3200$ per GPU, which substantially increases the coupling cost; see Appendix~\ref{subsec:runtime_benchmark}. For a fair conditional comparison, we use the Euclidean distance between text embeddings as the conditional distance and apply the weighting scheme in~\eqref{eq:conditional_weight}.

\paragraph{Evaluation.}
At evaluation, $50$K images are generated and assessed along two axes: image fidelity via FID against training-set reference statistics~\citep{ning2023input,wang2023patch}, and text-image alignment via CLIP score computed with the \texttt{siglip2-base-patch16-224} checkpoint of SigLIP-2~\citep{tschannen2025siglip2}.

\subsection{ImageNet-256}

We conduct class-conditional latent-space generation on ImageNet-$256\times256$~\citep{deng2009imagenet}. Images are encoded by VA-VAE~\citep{yao2025reconstruction} into latent representations, and a flow-based LightningDiT~\citep{yao2025reconstruction} is trained in the resulting latent space with class labels as condition inputs. All network architecture, optimizer, learning-rate schedule, EMA, precision, and sampling configurations follow the official implementation of~\citet{yao2025reconstruction}; due to computational constraints, we adopt the recommended lightweight training setting of $64$ epochs.

For QAT-FM, the Conditional QAT coupling is built directly over VA-VAE latent representations. PCA projection is applied, retaining the leading $k=256$ coordinates for tree construction; patch-level mean pooling is omitted since VA-VAE already maps images to a compact latent space. The tree splits on the maximum-variance coordinate at the empirical mean and recurses until each leaf contains at most one sample. Classifier-free guidance is applied during inference with a scale of $10$.

At evaluation, we follow the protocol of~\citet{yao2025reconstruction} and compute FID against the ImageNet-$256\times256$ reference statistics from guided-diffusion~\citep{dhariwal2021diffusion}. We additionally report IS, sFID, Precision, and Recall.

\section{Extended Experimental Results}
\label{sec:exp_add}

This section provides supplementary results comprising a runtime comparison of coupling strategies and additional visualization of generated samples. Appendix~\ref{subsec:runtime_benchmark} benchmarks the wall-clock time of each coupling approach at both construction and sampling phases; the subsequent four subsections present additional generated samples from CIFAR-10, ImageNet-$32\times32$, CelebA-$64\times64$, and ImageNet-$256\times256$.

\subsection{Runtime Comparison}
\label{subsec:runtime_benchmark}

We compare the wall-clock time of different coupling strategies at both construction and sampling phases. All experiments fix the feature dimension at $d=32$ and vary the dataset size $n$. To reflect the repeated pairing demand of FM training, we report cumulative times over $100$ training epochs. For OT-based methods, source points are resampled from a standard Gaussian and the coupling is recomputed from scratch at every epoch; by contrast, the QAT and SD-FM couplings are constructed once offline and reused throughout training. Full OT is solved with the \texttt{EMD} solver from POT~\citep{flamary2021pot}; mini-batch OT (OTBatch) uses \texttt{ot.dist\_batch} to compute batch pairwise distances, fully exploiting GPU parallelism. Entropy regularization with $\varepsilon=0.05$ is applied for both SD-FM and batch OT. SD-FM optimizes a semi-discrete coupling with SGD~\citep{mousavi2026flow}, running until convergence or $100$ optimization epochs.

\begin{figure}[ht]
    \centering
    \includegraphics[width=.75\linewidth]{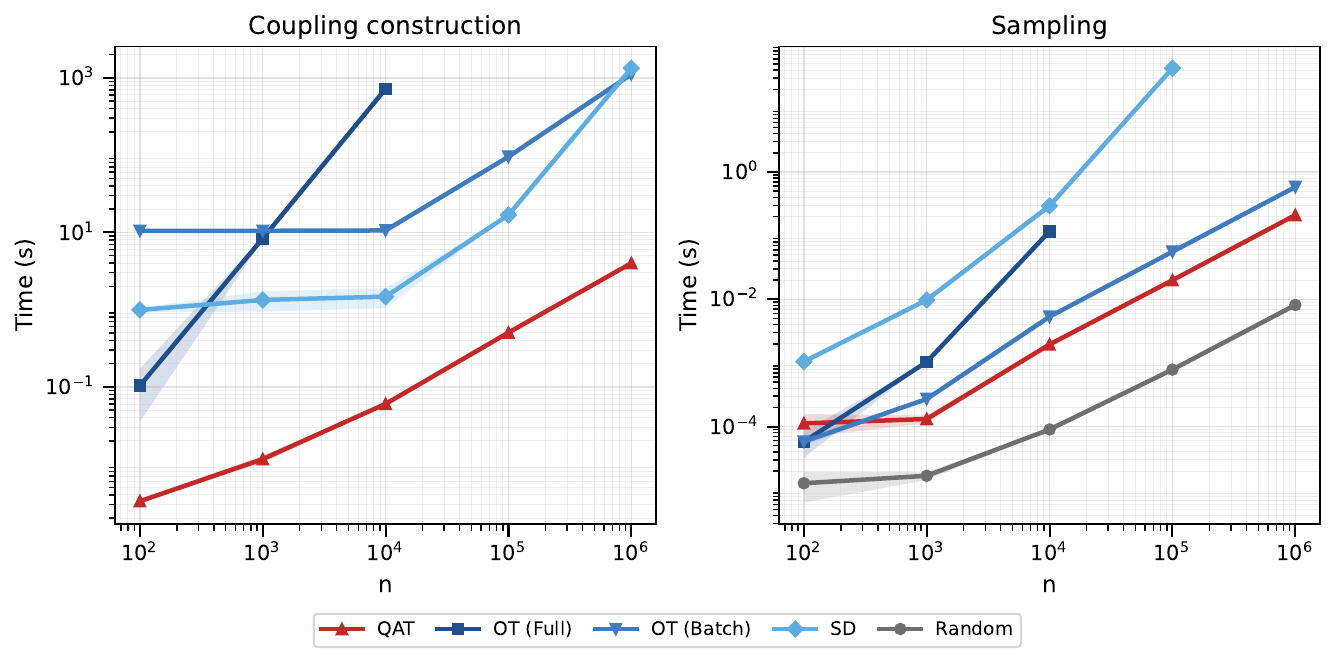}
    \caption{\footnotesize
    Runtime comparison of different coupling strategies.
    Left: total coupling construction time over $100$ epochs.
    Right: total sampling time over $100$ epochs.}
    \label{fig:runtime_benchmark}
\end{figure}

Figure~\ref{fig:runtime_benchmark} reports cumulative construction and sampling times over $100$ epochs. Full OT becomes rapidly infeasible as $n$ grows; batch OT reduces the per-epoch cost but must recompute the coupling every epoch, keeping its total time high. SD-FM requires only a single construction pass, but the semi-discrete optimization itself is expensive. By contrast, QAT constructs the tree and records the Gaussian bounds at each leaf in a single offline pass, achieving substantially lower construction time at all scales. At sampling time, QAT draws source points by querying leaf bounds and sampling from truncated Gaussians, placing its per-sample cost close to that of independent (random) sampling. OT-based methods must sample from the coupling's joint distribution, and SD-FM requires a nearest-neighbor lookup over all training points at each draw---both become prohibitive at large $n$. Overall, QAT achieves structured coupling at a sampling efficiency comparable to independent coupling.

\newpage
\subsection{CIFAR-10}

\begin{figure}[ht]
    \centering
    \includegraphics[width=\linewidth]{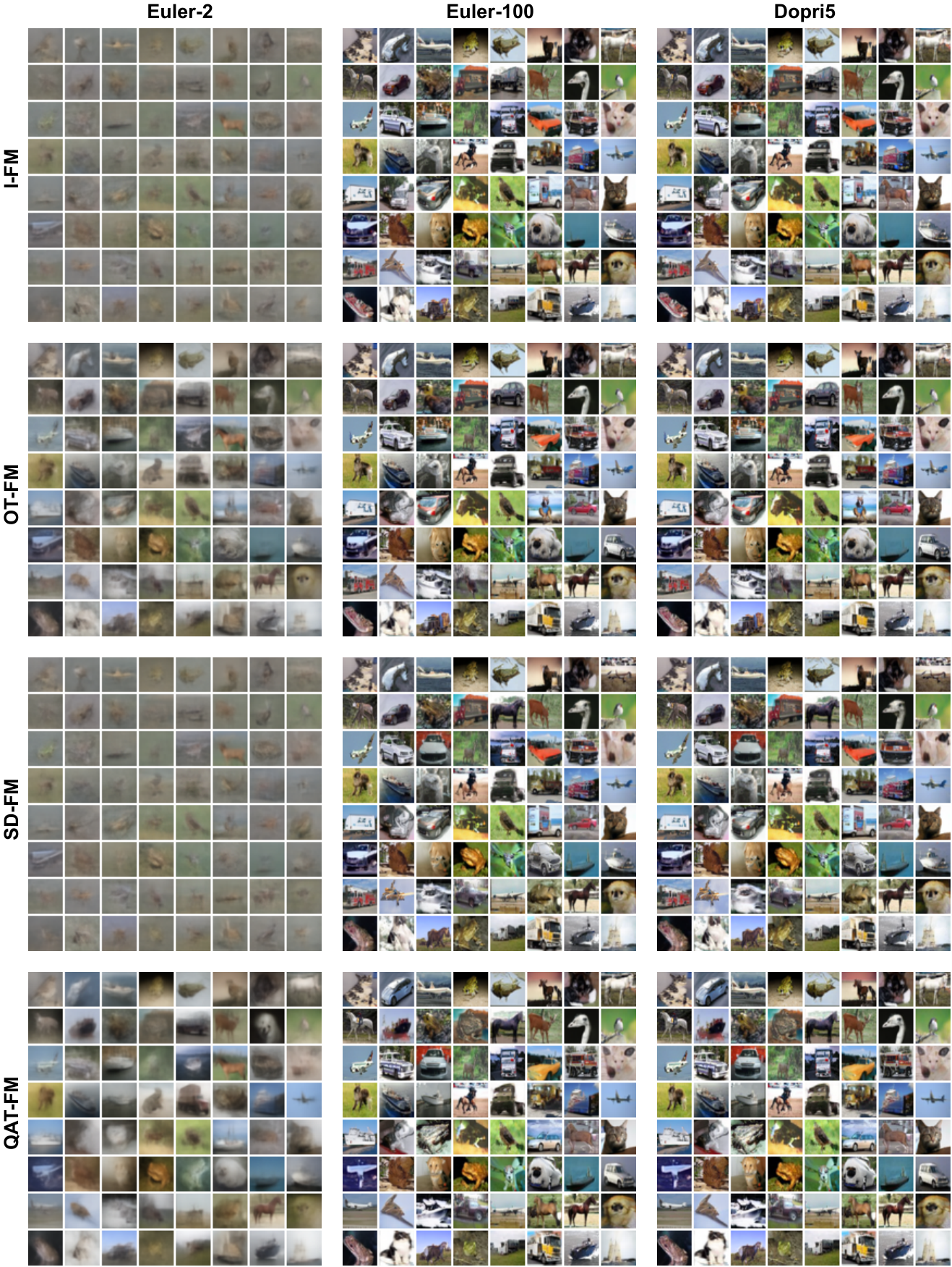}
    \caption{\footnotesize
    Additional generated samples for unconditional CIFAR-10 generation. Rows correspond to I-FM, OT-FM, SD-FM, and QAT-FM; columns compare three sampling configurations: \texttt{Euler-2}, \texttt{Euler-100}, and \texttt{dopri5}. All images are randomly generated without cherry-picking.
    }
    \label{fig:cifar10_fvis}
\end{figure}

\newpage
\subsection{ImageNet-32}

\begin{figure}[ht]
    \centering
    \includegraphics[width=\linewidth]{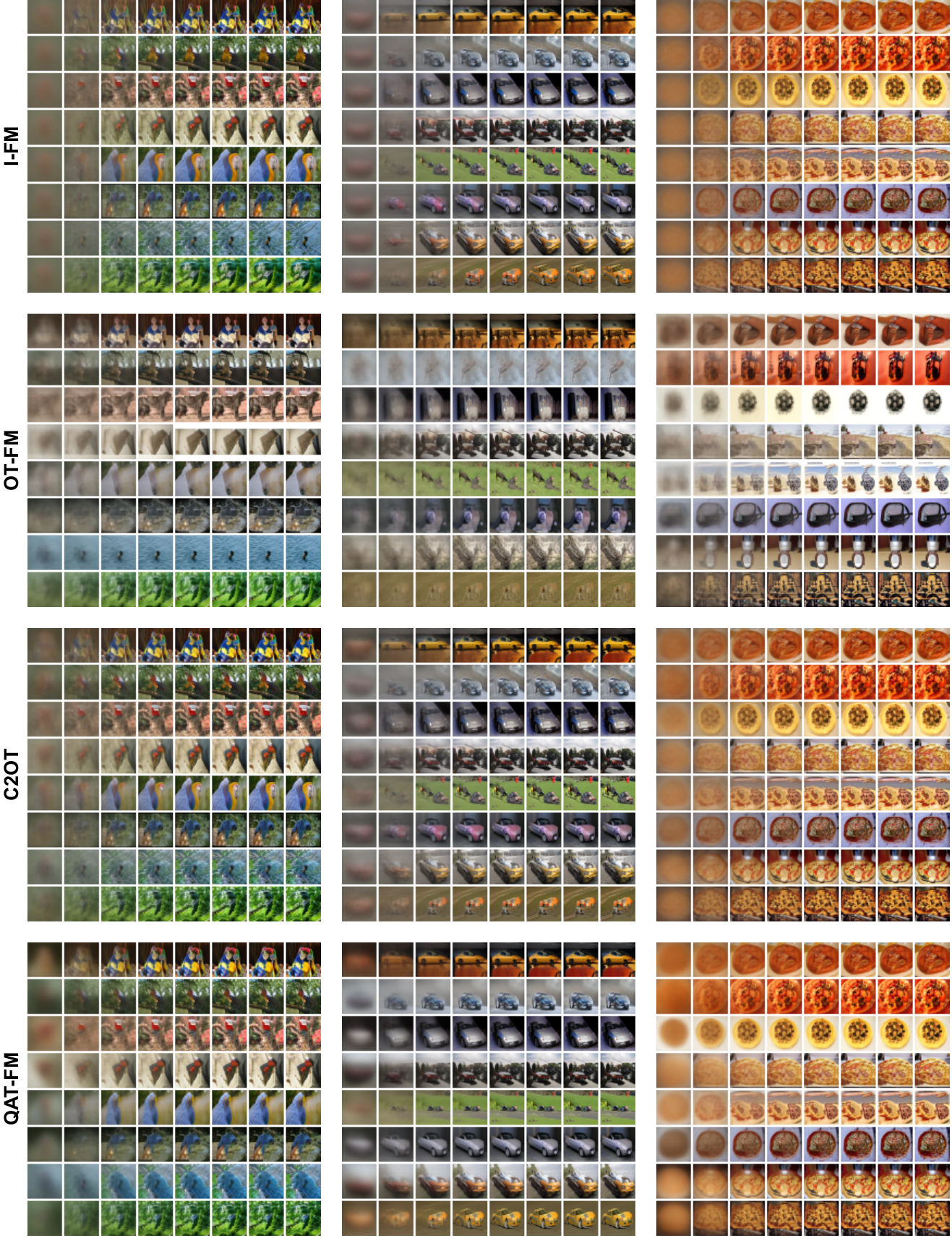}
    \caption{\footnotesize
    Additional generated samples for class-conditional ImageNet-$32\times32$ generation. The four row blocks correspond to I-FM, OT-FM, C2OT, and QAT-FM, respectively. The three column blocks correspond to macaw, sports car, and pizza class conditions. Within each class block, the eight columns show samples generated with \texttt{Euler-1}, \texttt{Euler-2}, \texttt{Euler-5}, \texttt{Euler-10}, \texttt{Euler-20}, \texttt{Euler-50}, \texttt{Euler-100}, and \texttt{Dopri5}, respectively. All images are randomly generated without cherry-picking.
    }
    \label{fig:img32_fvis}
\end{figure}

\newpage
\subsection{CelebA-64}

\begin{figure}[ht]
    \centering
    \includegraphics[width=.93\linewidth]{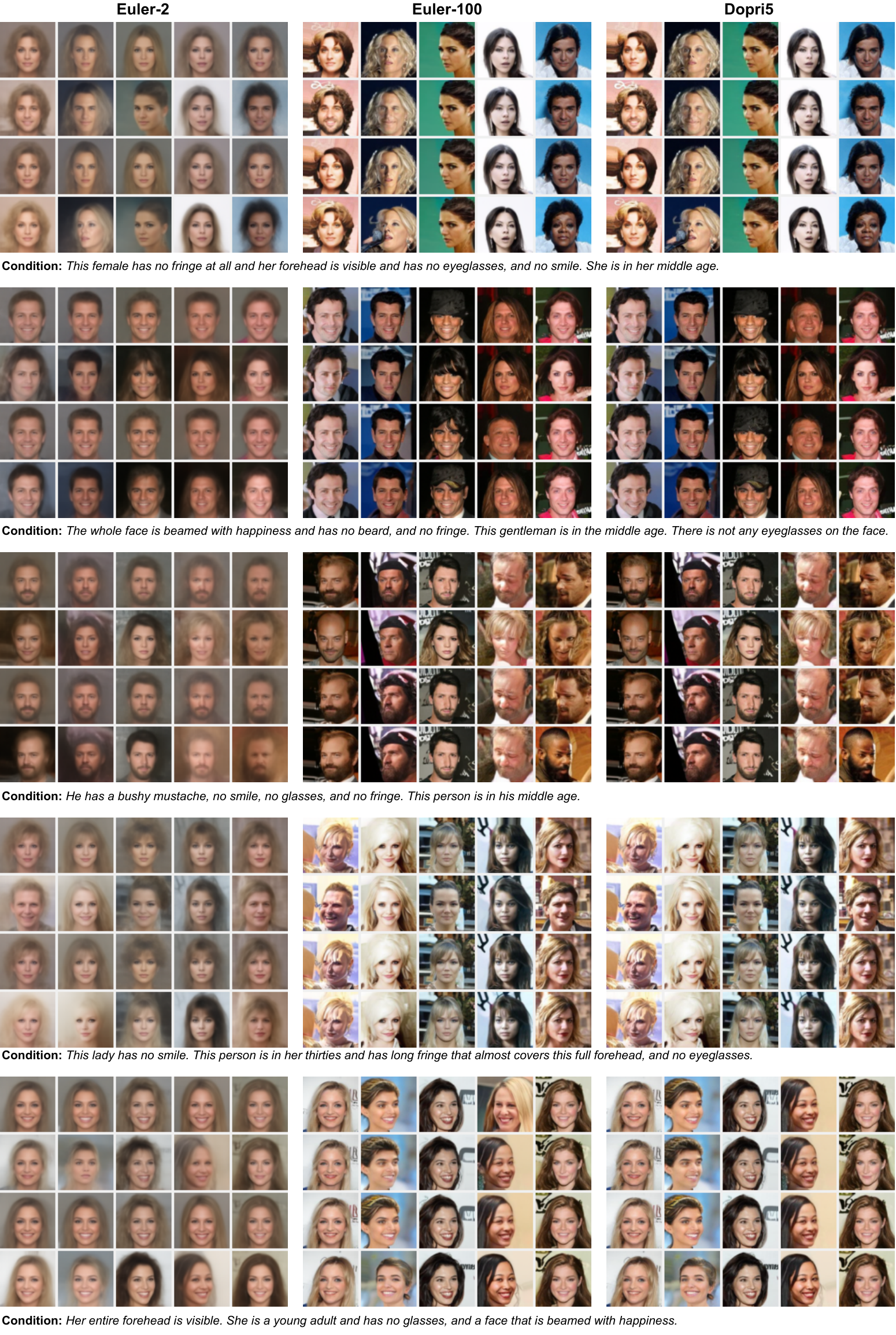}
    \caption{\footnotesize
    Additional generated samples for text-conditional CelebA-$64\times64$ generation. Each horizontal block corresponds to one CelebA-Dialog text condition; the three columns show results under \texttt{Euler-2}, \texttt{Euler-100}, and \texttt{Dopri5}; within each block, the four rows correspond to I-FM, OT-FM, C2OT and QAT-FM. All images are randomly generated without cherry-picking.
    }
    \label{fig:celeba_fvis}
\end{figure}

\newpage
\subsection{ImageNet-256}

\begin{figure}[ht]
    \centering
    \includegraphics[width=.94\linewidth]{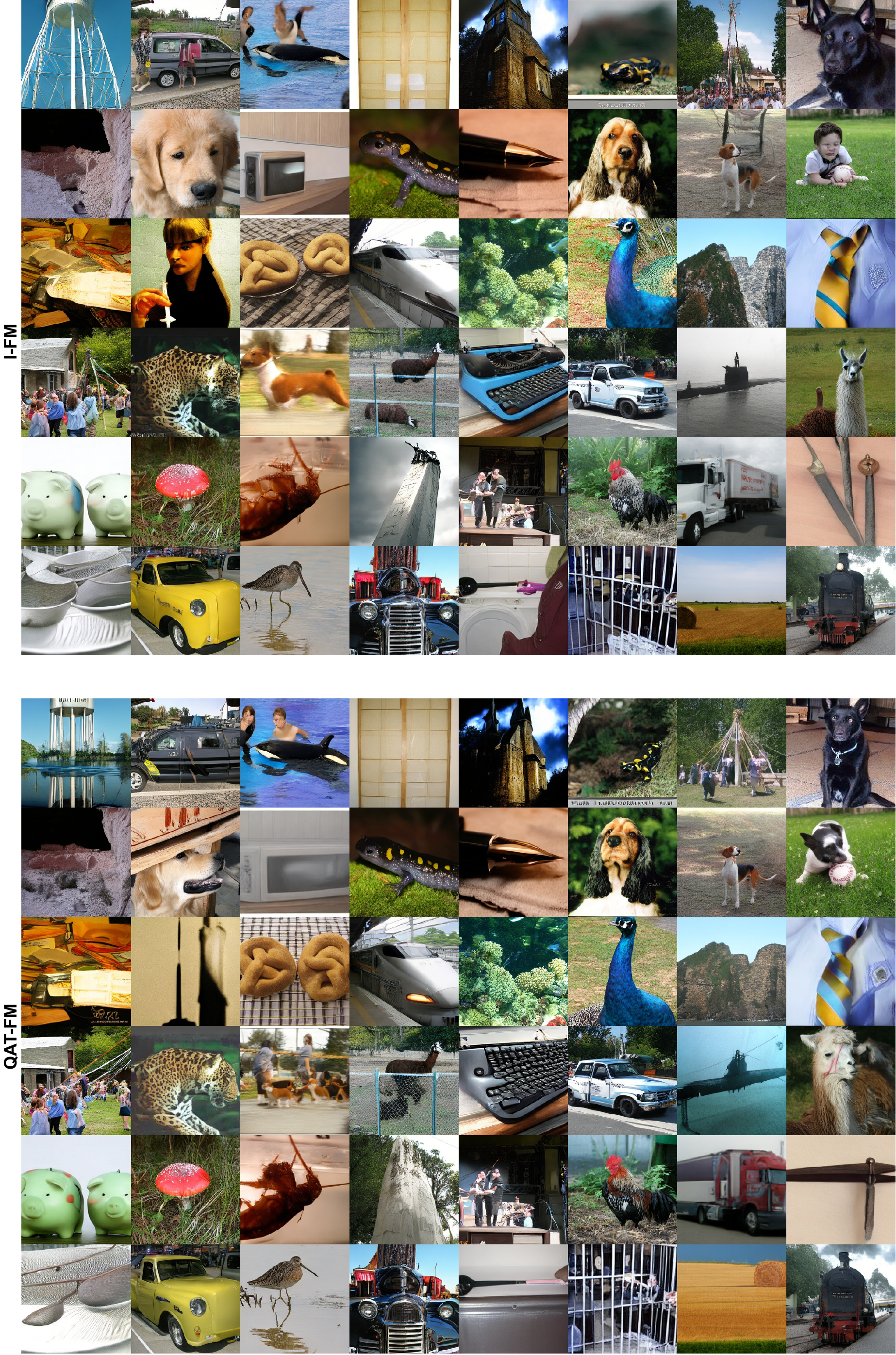}
    \caption{\footnotesize
    Additional generated samples for class-conditional latent-space generation on ImageNet-$256\times256$. The top and bottom groups show samples from I-FM and QAT-FM, respectively, generated under the same random seed and class conditions. All images are randomly generated without cherry-picking.
    }
    \label{fig:img256_fvis}
\end{figure}

\end{document}